\newtheorem{theorem}{Theorem}
\newtheorem{lemma}{Lemma}
\definecolor{cvprblue}{rgb}{0.21,0.49,0.74}
\title{Oscillation Inversion: Understand the structure of Large Flow Model through the Lens of Inversion Method}
    \author{Yan Zheng$^{1,*}$, Zhenxiao Liang$^{1,*}$, Xiaoyan Cong$^2$,\\
Lanqing guo$^1$, Yuehao Wang$^1$, Peihao Wang$^1$, Zhangyang Wang$^1$\\
[6pt]
$^1$ University of Texas at Austin
$^2$ Brown University
}
\begin{document}
\maketitle
\begin{center}
    \centering
    \captionsetup{type=figure}
    \includegraphics[width=1.0\textwidth]{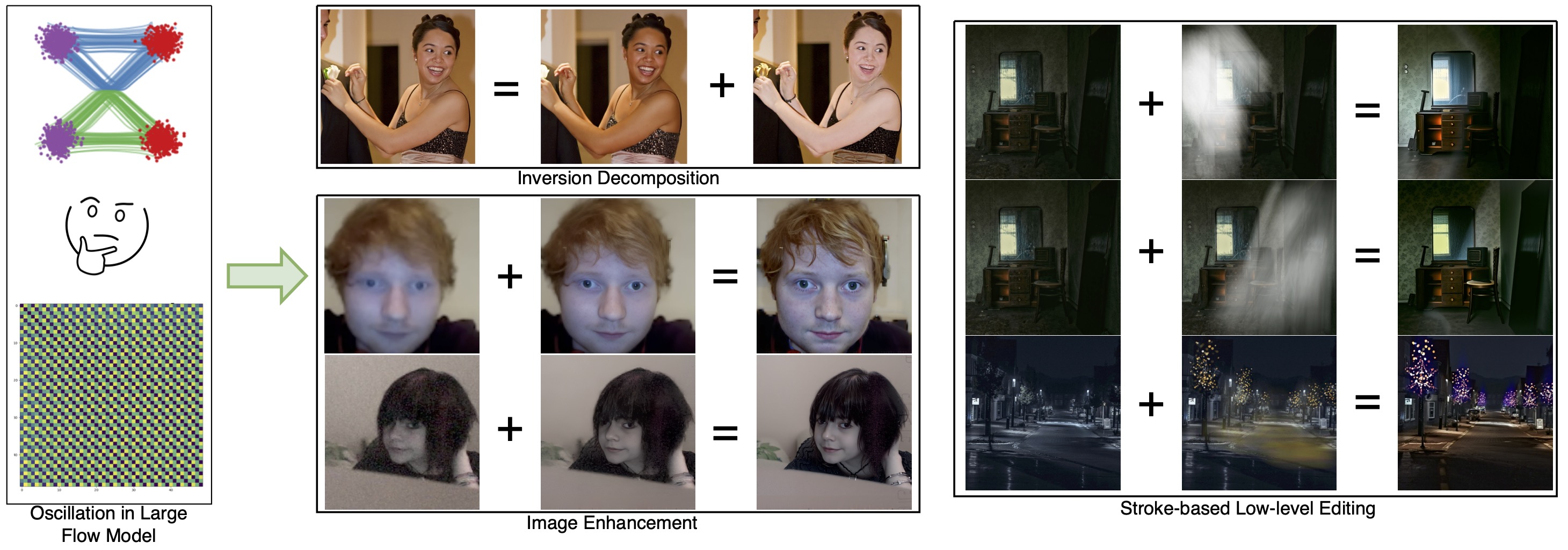}
    
    \captionof{figure}{Oscillation Inversion  is a phenomenon observed in large flow models. Building on this, we developed a simple and fast method that serves as a distribution transfer technique, enabling image enhancement as well as low-level editing, e.g. stroke-based relighting and recoloring.}
\label{fig:teaser}
\end{center}

\renewcommand{\thefootnote}{}
\footnotetext{$^*$ Equal Contribution}
\renewcommand{\thefootnote}{\arabic{footnote}}

\begin{abstract}
We explore the oscillatory behavior observed in inversion methods applied to large-scale text-to-image diffusion models, with a focus on the "Flux" model. By employing a fixed-point-inspired iterative approach to invert real-world images, we observe that the solution does not achieve convergence, instead oscillating between distinct clusters. Through both toy experiments and real-world diffusion models, we demonstrate that these oscillating clusters exhibit notable semantic coherence. We offer theoretical insights, showing that this behavior arises from oscillatory dynamics in rectified flow models. Building on this understanding, we introduce a simple and fast distribution transfer technique that facilitates image enhancement, stroke-based recoloring, as well as visual prompt-guided image editing. Furthermore, we provide quantitative results demonstrating the effectiveness of our method for tasks such as image enhancement, makeup transfer, reconstruction quality, and guided sampling quality. Higher-quality examples of videos and images are available at \href{https://yanyanzheng96.github.io/oscillation_inversion/}{this link}.
\end{abstract}

\section{Introduction}

{Very recently, large text-to-image diffusion models utilizing rectified flow~\cite{wang2024_flowtarget}, like the Flux model from Black Forest~\cite{blackforestlabs_flux}, have demonstrated exceptional performance in generating high-quality images with rapid sampling.
However, the underlying latent structure of rectified flow-based models presents unique challenges, as it differs fundamentally from the layered manifold structure of DDPMs~\cite{ho2020denoising} shaped by a parameterized Markov chain.
This distinction makes previous inversion techniques, such as DDIM inversion~\cite{song2020denoising}, and editing methods like SDEdit~\cite{meng2021sdedit}, less viable. 
Therefore, adopting a new perspective for understanding and navigating the latent space of these flow-based models is essential for enabling more effective inversion and image manipulation strategies.
}

{
When attempting to invert real-world images using fix-point iteration methods in rectified flow-based models, we observe that the sequence of iterates does not converge to a single point but instead oscillates between several clusters.
These clusters are semantically meaningful and can be leveraged for image optimization and editing tasks.
This behavior contrasts with the fixed-point methods used in DDIM\cite{aidi, garibi2024renoise}, which are primarily designed to mitigate numerical accumulation errors at each step, ensuring smooth convergence to a single, stable solution. 
%
The oscillatory nature observed in rectified flow-based models, however, opens up new possibilities for iterative refinement and enhanced flexibility in image inversion tasks.
}

{
To investigate this, we first propose \textbf{Oscillation Inversion}, a method that uses fixed-point iteration to directly establish a one-to-one mapping between noisy latents at a given timestep and the corresponding encoded image latent.
The inverted latents oscillate among several clusters, which can serve as local latent distributions, facilitating effective semantic-based image optimization.
Additionally, we generalize this fixed-point method in three ways for broader downstream applications:
\textbf{1) Group Inversion}: We invert a group of images simultaneously, rather than a single image, enabling semantic guidance and blending across images.
\textbf{2) Finetuned Inversion}: By controlling the oscillation direction, we provide a  mechanism for customized, user-driven manipulation of the images.
\textbf{3) Post-Inversion Optimization}: After inversion, we perform optimization, and analyze the differentiable structure induced by the separated sub-distributions created by oscillation inversion.
%
These extensions make Oscillation Inversion a versatile tool for various image manipulation tasks.
}

{
The main contributions of this work are as follows:
\begin{itemize}
    \item We propose Oscillation Inversion to facilitate one-step inversion to any timestep in rectified flow-based diffusion models for the diverse manipulation of latents. Additionally, we present three extensions that enable diverse user inputs for real-world applications.
    \item We propose a theoretical framework to explain and validate the oscillation phenomenon, which aligns closely with the results obtained from rectified flow trained on a toy dataset.
    \item Extensive experiments on various downstream tasks, such as image restoration and enhancement, stroke based make up transfer, validate our theoretical findings and demonstrate the effectiveness of our method on both perceptual quality and data fidelity.
\end{itemize}
}

\section{Related Works}
\label{related_works}

\noindent\textbf{Flow Model. } 
Diffusion models~\cite{rombach2022high, saharia2022photorealistic, ramesh2022hierarchical} generate data by a stochastic differential equation (SDE)-based denoising process and probability flow ordinary differential equations (ODE)~\cite{song2020score, lipman2022flow, lipman2022flow, salimans2022progressive, song2023consistency} improves sampling efficiency by formulating the denoising process into a ODE-based process. However, probability flow ODE-based methods  suffer from the computational expense of denoising via numerical integration with small step sizes. To address these issues, some simulation-free flow models have emerged, e.g. flow matching~\cite{lipman2022flow} and rectified flow~\cite{liu2022flow}. Flow matching introduces a training objective for continuous normalizing flows~\cite{chen2018neural} to regress the vector field of a probability path. Rectified flow learns a transport map between two distributions through constraining the ODE to follow the straight transport paths. Since the latent structure of flow models differs fundamentally from the layered manifold structure of Denoising Diffusion Probabilistic Models (DDPMs)~\cite{ho2020denoising}, it is valuable to explore the intrinsic characteristics of the flow models' latent space.

\vspace{10pt}
\noindent\textbf{Diffusion-based Inversion.} The rise of diffusion models~\cite{rombach2022high, saharia2022photorealistic, ramesh2022hierarchical} has unlocked significant potential of inversion methods for real image editing, which are primarily categorized into DDPM~\cite{ho2020denoising} based~\cite{wu2023latent, hubermanspiegelglas2024editfriendlyddpmnoise} and DDIM~\cite{song2020denoising} based methods~\cite{pan2023effective, garibi2024renoise, li2024source, meiri2023fixed}. While DDPM-based methods yield impressive editing results, they are hindered by their inherently time-consuming and stochastic nature, due to the random noise introduced across a large number of inversion steps~\cite{wu2023latent, hubermanspiegelglas2024editfriendlyddpmnoise}. DDIM-based methods utilize the DDIM sampling strategy to enable a more deterministic inversion process, substantially reducing computational overhead and time. However, the linear approximation behind DDIM often leads to error propagation, resulting in reconstruction inaccuracy and content loss, especially when classifier-free guidance (CFG) is applied~\cite{mokady2023null}. Recent approaches,~\cite{wallace2023edict, mokady2023null, pan2023effective, miyake2023negative, han2023improving, Hong_2024_CVPR}, address these issues by aligning the diffusion and reverse diffusion trajectories through the optimization of null-text tokens~\cite{mokady2023null} or prompt embeddings~\cite{han2023improving, miyake2023negative}. EDICT~\cite{wallace2023edict} and BDIA~\cite{zhang2023exact} introduce invertible neural network layers to enhance computational efficiency and inversion accuracy, though these methods suffer from notably longer inversion times. To tackle this, recent works~\cite{meiri2023fixed, pan2023effective}~\cite{garibi2024renoise, li2024source} have adopted fixed-point iteration for each inversion step, mitigating numerical error accumulation and ensuring smooth convergence to a single, stable solution. Interestingly, when applied to rectified flow-based methods, the sequence of fixed-point iterates oscillates between several semantically meaningful clusters, presenting significant potential for downstream applications.
\begin{figure*}[t]
  \centering
  \includegraphics[width=\textwidth]{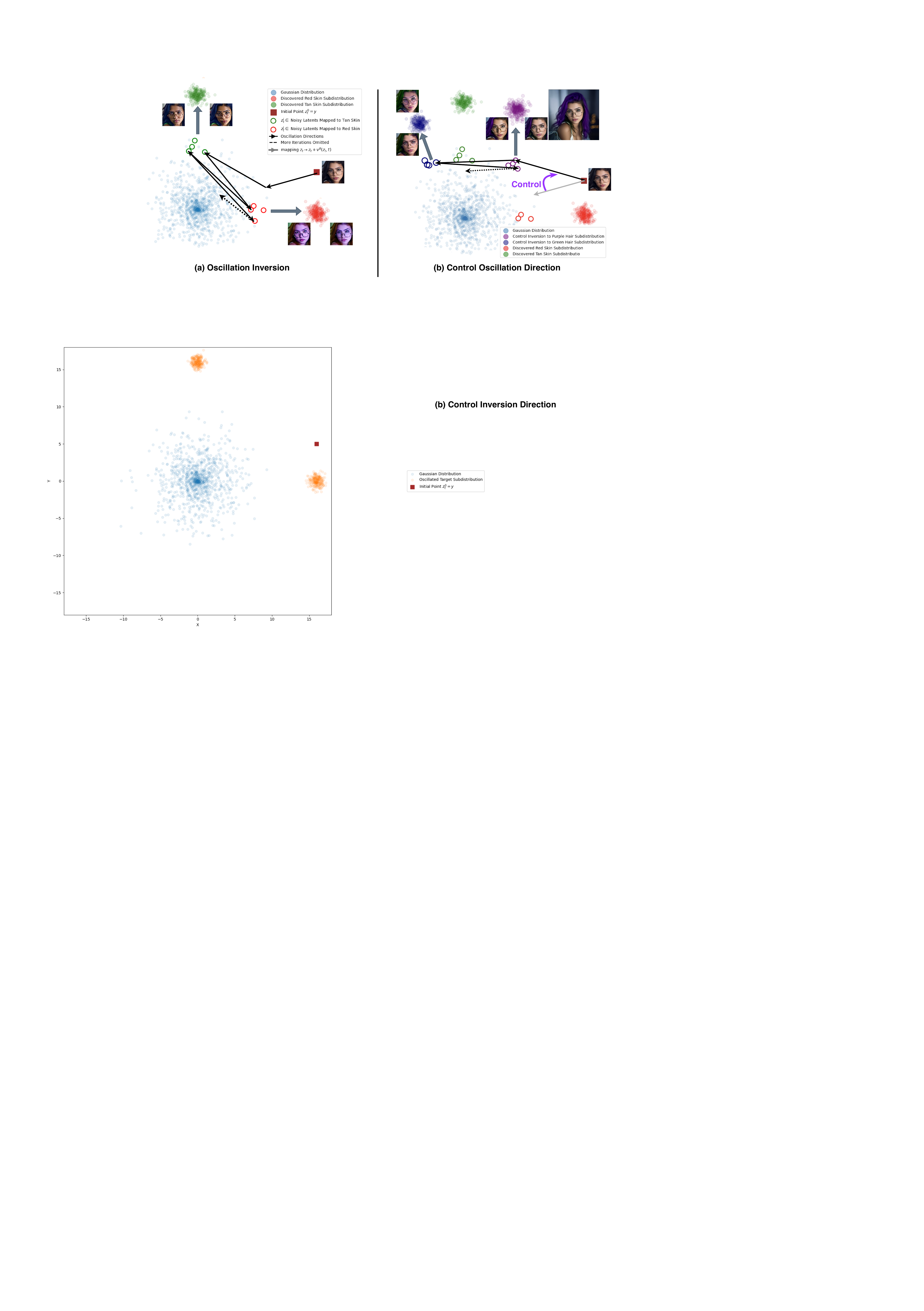}
  \caption{In the left figure (a), fixed-point iteration causes oscillation, leading to subdomains with opposite features in the case of the brown-skinned girl, resulting in more tan and red tones. In the right figure (b), we demonstrate how this oscillation can be customized to achieve desired separations, such as changes in hair color. }
  \label{fig:pipeline}
\end{figure*}

\section{Oscillation Inversion}
\label{method}

\subsection{Preliminary}

\subsubsection{Rectified Flow}

Rectified flow~\cite{liu2022flow} is a novel generative approach that facilitates smooth transitions between two distributions, denoted $\pi_0$ for noise and $\pi_1$ for target, by solving ordinary differential equations (ODEs). Specifically, for $X_0 \sim \pi_0$ and $X_1 \sim \pi_1$, the transition between $x_0$ and $x_1$ is defined through an interpolation given by $X_t = (1-t)X_0 + tX_1$ for $t \in [0, 1]$. \cite{liu2022flow} demonstrated that, starting from $Z_0 \sim \pi_0$, the following ODE can be used to obtain a trajectory that preserves the marginal distribution of $Z_t$ at any given time $t$:
\begin{equation}
    \frac{d Z_t}{dt} = v^{X}(Z_t,t),\ \textup{where } v^{X}(x,t) := \mathbb{E}[X_1 - X_0 \mid X_t = x].
\label{eq: rectified flow}
\end{equation}

The solution of $v^{X}$ in Eq.~(\ref{eq: rectified flow}) is obtained by optimizing the following loss via stochastic coupling sampling $(X_0,X_1)\sim (\pi_0,\pi_1)$ and $t\sim \textup{Uniform}([0,1])$,
\begin{equation}
    v^{X} = \underset{v}{\arg\min}\ \mathbb{E} \left[ \big\| (X_1 - X_0) - v(X_t, t) \, \big\|^2 \right].
\label{eq: optimization formula of rectified flow}
\end{equation}

\subsection{Method}


In this section, we first formulate the inversion problem for rectified flow-based models (Sec.\ref{sec:problem}). To address this, we introduce Oscillation Inversion, which constructs one-step inversion using fixed-point iteration (Sec.\ref{sec:oscillation}). We then propose fine-tuned inversion to enable controllable latent structures (Sec.\ref{sec:finetune}), enabing make-up transfer task in Sec.\ref{section:makeup} , followed by post-inversion optimization that leverages the differentiation structure introduced by one-step inversion (Sec.\ref{sec:postoptim}), enabling visual prompt-guided sampling(Sec.\ref{sec:optim_app}). The concept behind our general method is depicted in Figure \ref{fig:pipeline}.


\subsubsection{Inversion Problem}\label{sec:problem}
In practice, the large flow model in the context of generative modeling operates within the latent space of a Variational Autoencoder (VAE)~\cite{kingma2013auto}, utilizing an encoder $E: \mathbb{R}^d \rightarrow \mathbb{R}^n$ and a decoder $D: \mathbb{R}^n \rightarrow \mathbb{R}^d$.
The sampling process begins from Gaussian noise \( z_T \sim \mathcal{N}(0, \mathbf{I}) \), and the latent variable is progressively refined through a sequence of transformations. The forward generative process is defined by the following iterative formula starting from $t=T$ all the way back to $t=0$:
\begin{equation}
    z_{t-1} = z_t + \left( \sigma_{t-1} - \sigma_t \right) v_{\theta}(z_t, \sigma_t),
\end{equation}
where $v_{\theta}(z_t, \sigma_t)$ represents the learned velocity field parameterized by a transformer with weights $\theta$, and $\sigma_t$ is a monotonically increasing time step scaling function depending on time $t$ with $\sigma_0=0$ and $\sigma_T=1$. Here, $T$ denotes the total number of discredited timesteps. The final latent variable, $z_0$, is the output ready for decoding. 

The inversion problem involves seeking for the initial noise $z_T$ given an observed pixel image $I$ with corresponding latent encoding $y \in \mathbb{R}^d$, such that generating from $z_T$ using the flow model described above allows us to either reconstruct $y$ or apply desired modifications to it.

However, the gradual process of sampling from Gaussian noise to the original $y$ diminishes the advantage of GAN-like one-step mappings for direct latent space optimization. To address this, unlike tackling with the initial noise at $t=T$, we introduce the assumption that, at a selected intermediate timestep $t_0$, there exists a direct one-step mapping from the noisy latent at timestep $t_0$ to the clean latent at timestep $0$ via a ``jumping'' transformation.

More specifically, assuming $y$ is the latent code to recover, we aim to figure out the \textit{intermediate} latent code $z_{t_0}$ satisfying
\begin{equation}
    z_{t_0} + \left( \sigma_0 - \sigma_{t_0} \right) v_{\theta}(z_{t_0}, \sigma_{t_0})= y.
    \label{eq: jump}
\end{equation}
Solving Eq.~(\ref{eq: jump}) is non-trivial, and in the following sections, we will describe how we find a set of approximated solutions using iterative method and analyze its oscillating properties.

\subsubsection{Oscillations Inversion}\label{sec:oscillation}

\begin{figure}
\centering
\includegraphics[width=0.95\textwidth]{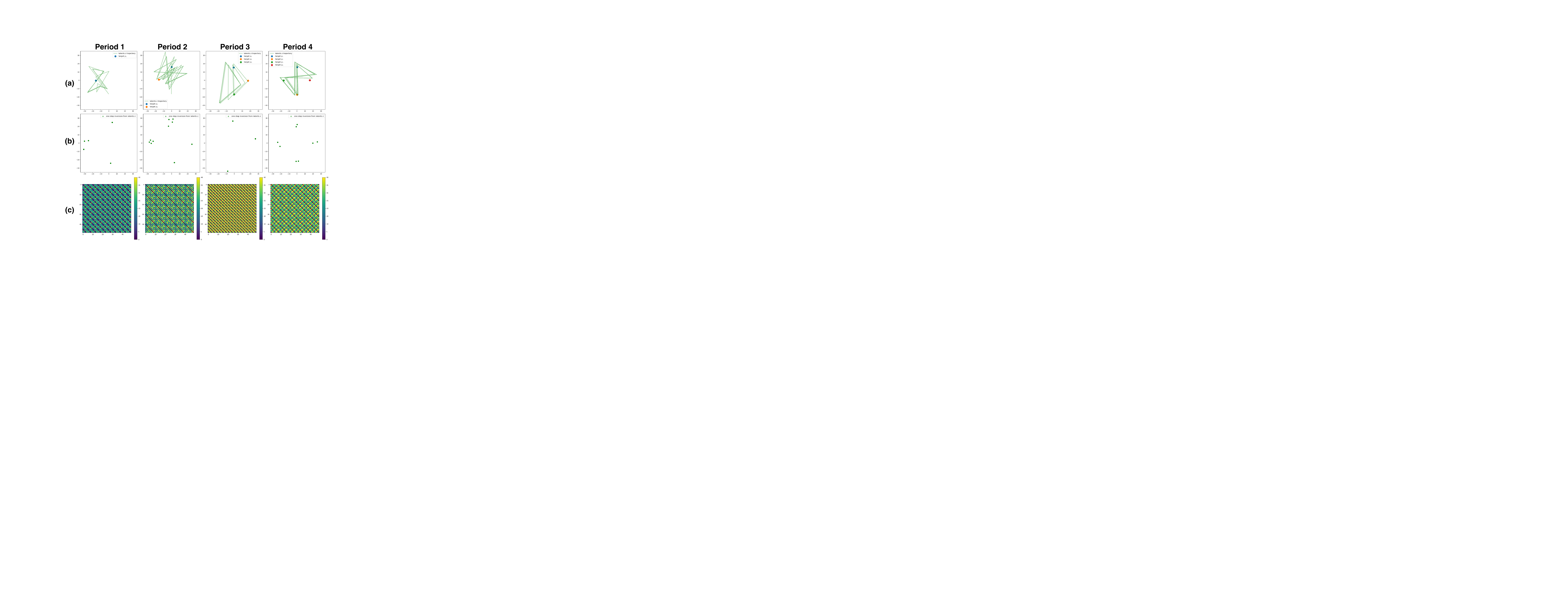}
\caption{By training flow matching on the distribution displayed in Fig. \ref{fig:mixturegaussian}, we demonstrate that the oscillation inversion phenomenon observed in large flow models aligns well with that seen in toy data. The period $m$ indicates that our group inversion starts with $m$ initial $y$s. Row (a) shows the trajectory of inverted latents, $z_{t_0}^{(k+1)}$. Row (b) shows the one-step prediction back to the input space from these latents. Row (c) presents the quantitative distance along the latents' trajectory, reflecting clear periodic patterns.}
\label{fig:theory_refine}
\end{figure}


To address the inversion problem, we employ a fixed-point iteration method to approach the solution of Eq.~(\ref{eq: jump}). Instead of directly seeking a point $z_{t_0}$ such that applying the one-step generative process as described in the left side of (\ref{eq: jump}) yields the target latent $y$, we define an iterative process that refines our approximation of the inverted latent code.
We define the fixed-point iteration as:
\begin{equation} z^{(k+1)}_{t_0} = y - (\sigma_0 - \sigma_{t_0}) v_{\theta}(z^{(k)}_{t_0}, \sigma_{t_0}), 
\label{eq: iterative formula}
\end{equation}
with the initial condition $z^{(0)}_{t_0} = y$.
The sequence $\{ z^{(k)}_{t_0} \}_{k=0}^{\infty}$ represents successive approximations of the inverted latent code at timestep $t$.

As shwon in Figure~\ref{fig:theory_refine}, rather than converging to a single point as suggested by Banach's Fixed-Point Theorem~\cite{banach1922operations}, we empirically observed that the sequence $\{ z^{(k)}_{t_0} \}_{k=0}^{\infty}$ generally oscillates among several clusters in the latent space. Each cluster corresponds to a semantically concentrated region that shares similar low-level features. This oscillatory behavior can be harnessed to explore different variations of the input image, providing a richer inversion that captures multiple aspects of the data.

\begin{figure}[t]
\centering
\includegraphics[width=0.7\textwidth]{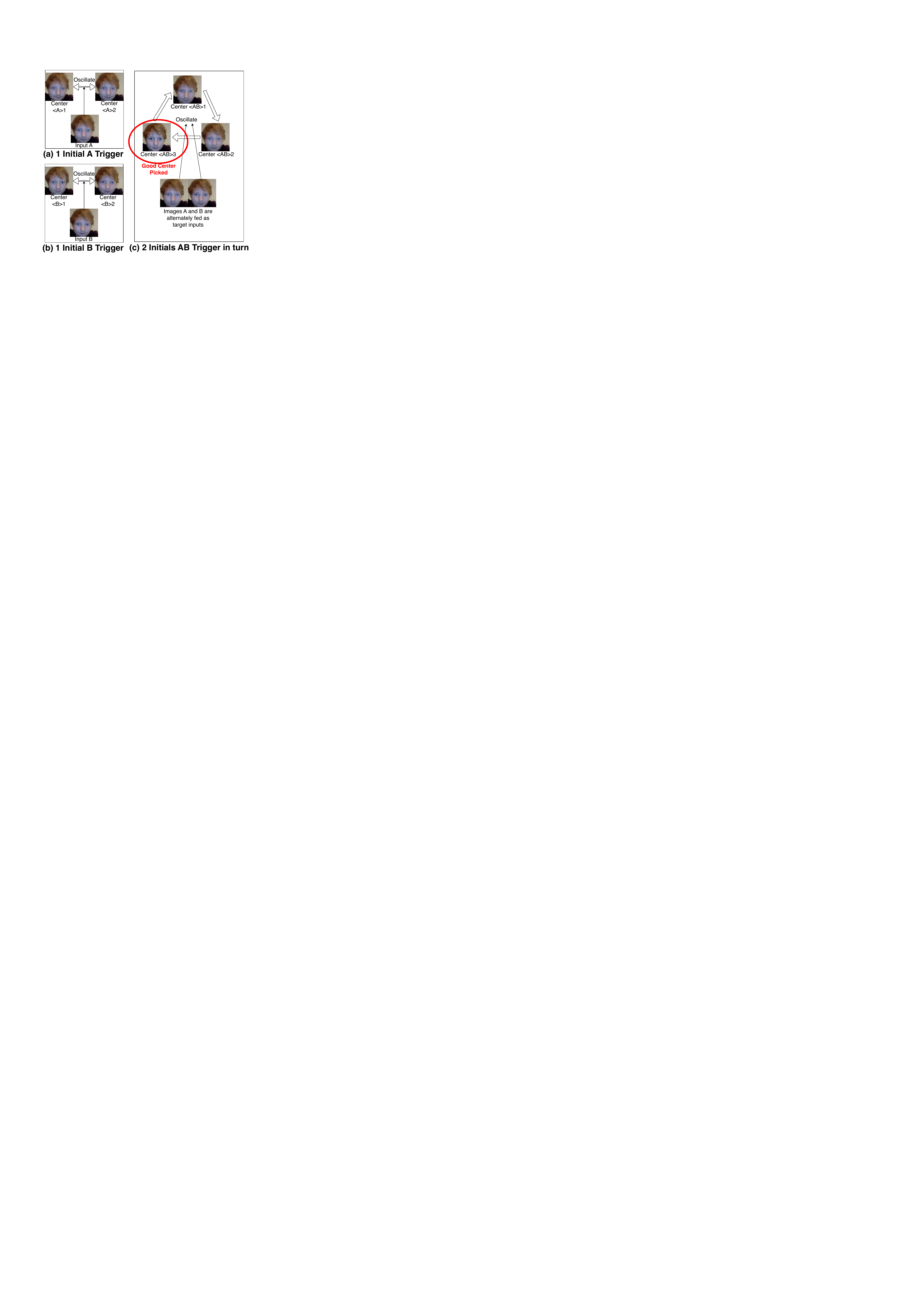}
\caption{This is an example of group inversion, where the high-quality distribution is triggered by two degenerate distributions. We also apply this method to large-scale experiments in Section \ref{sec:largescale}.}
\label{fig:trigger}
\end{figure}

Building upon the fixed-point method, we introduce \textbf{group inversion} that can trigger more stable oscillation phenomena by inverting a set of images simultaneously in a periodic fashion as shown in Figure~\ref{fig:trigger}. Suppose we obtain their corresponding latent encodings $y_1,\dots, y_m$ from  a collection of images $I_1,\dots, I_m$ using the VAE encoding. We could perform the iteration on the group:
\begin{equation}
z_{t_0}^{(k+1)} = y_{(k\bmod m)} - (\sigma_0 - \sigma_{t_0}) v_{\theta}(z_{t_0}^{(k)}, \sigma_{t_0}),
\label{eq:group inversion}
\end{equation}
with initial conditions $z_{t_0}^{(0)} = y_{(1 \bmod m)}$. By inverting the images together, we enable interactions between their latent representations during the iteration process.
Our experimental findings indicate that this collective inversion induces more diverse oscillatory clusters in the latent space. Also, the oscillation phenomenon observed in the flow model trained on a toy distribution transitions from a large central Gaussian to a mixture of four smaller Gaussians(Fig. \ref{fig:mixturegaussian}), as showed in Figure \ref{fig:theory_refine} This behavior aligns closely with the results from our experiments on larger models. As illustrated in Figure \ref{fig:trigger} , we find that group inversion can be used as a domain transfer technique to obtain higher-quality image clusters by mixing two lower-quality clusters. Based on this, we also validate this technique on a larger-scale experiment in Section \ref{sec:largescale}.

\subsubsection{Fintuned Inversion}\label{sec:finetune}

While the fixed-point iteration method introduced earlier reveals oscillations that segment the latent space into separate clusters, these clusters are not directly controllable.

To overcome this limitation, we propose \textbf{finetuned inversion}, a simple fine-tuning step that adjusts the inversion direction, allowing the separated clusters to align with customized semantics. This approach can also induce more diverse oscillatory separations in the latent space.

Given an input image $I$ with its encoded latent representation $y$, we consider the image $\tilde{I}$ modified from $I$ that reflects desired editing, such as alterations made using off-the-shelf masking and inpainting models or customized doodling edits. For example, if original $I$ is an image of a girl with black hair, $\tilde{I}$ could be a roughly edited version where the girl's hair is doodled to appear purple. Importantly, the edited image $\tilde{I}$ does not have to be photo-realistic nor perfect.

Encoding the edited image $\tilde{I}$ to obtain $\tilde{y}$, our goal is to fine-tune the parameters $\theta$ of the velocity field network $v_{\theta}$ so that the inversion process aligns with the desired modifications. The fine-tuning optimization problem could be formulated as $ \underset{\theta}{\textup{minimize}} \mathcal{L}_{\text{finetune}} = \left\| v_{\theta}(y, \sigma_t) - v^{\textup{gt}}(\tilde{y}, \sigma_t) \right\|_2^2 $, where $v_{\theta}$ is the velocity field parameterized by $\theta$ that we aim to finetune, and $v^{\textup{gt}}$ is the pretrained velocity field with frozen weights. An example of visual results can be seen in Figure \ref{fig:gradual}.


\begin{figure}
\centering
\includegraphics[width=0.7\textwidth]{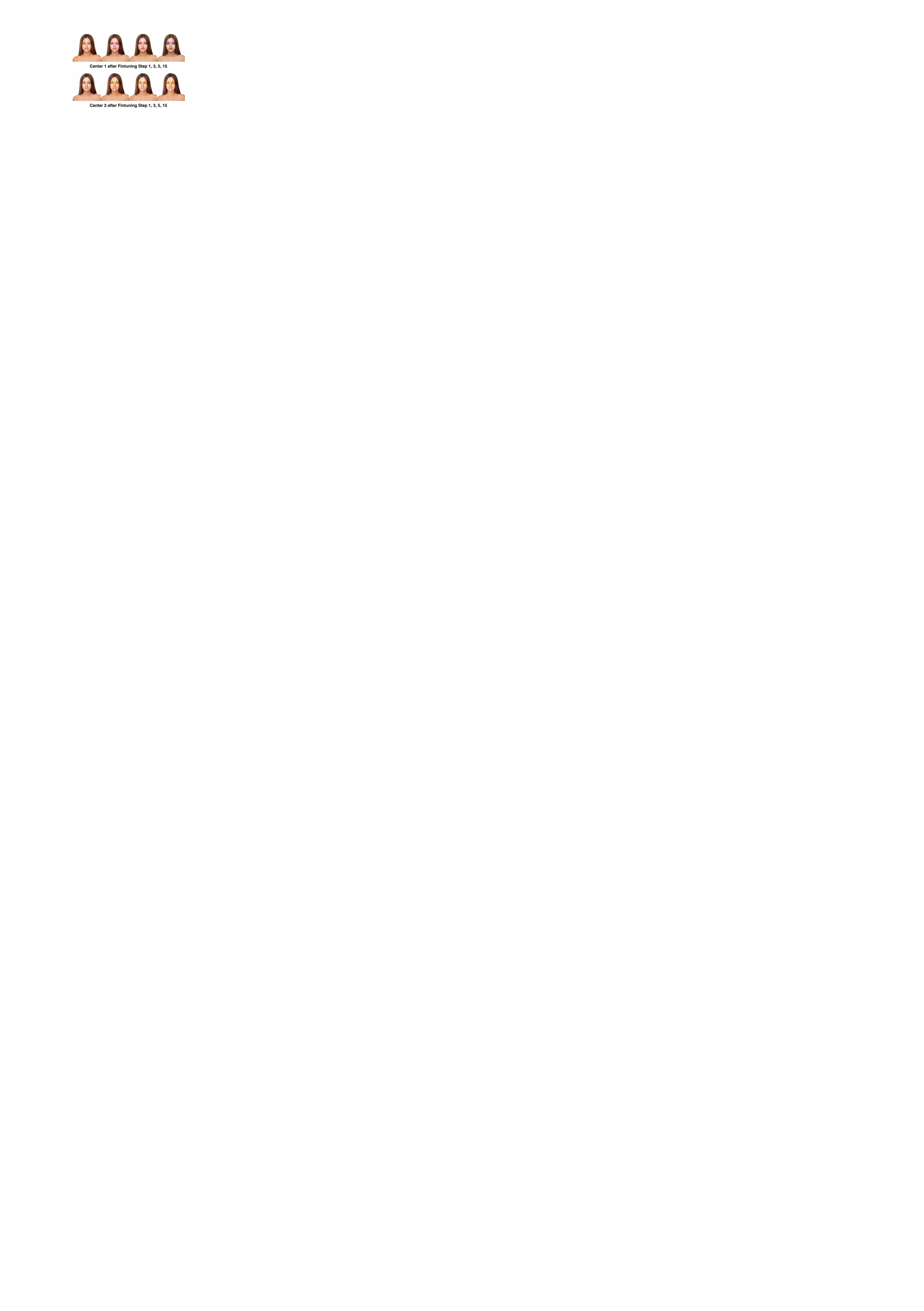}
\caption{Gradual evolution of two oscillating distributions occurs under the influence of visual prompts during fine-tuning. We utilize this method in the experiments discussed in Section \ref{section:makeup}. }
\label{fig:gradual}
\end{figure}

Through force aligning $v_{\theta}(y)$ and $v^{\textup{gt}}(\tilde{y})$ via optimizing over $\theta$, we effectively put strong guidance on the inversion direction so that the resulting oscillatory clusters obtained as described in Sec.~\ref{sec:oscillation} would be pulled towards the semantics specified by the reference image $\tilde{I}$.

\subsubsection{Post-inversion Optimization}\label{sec:postoptim}

Following the fixed-point iterations and optional fine-tuning, we perform a post-inversion optimization to further refine the identified latent code $z_{t_0}$. In Theorem \ref{theorem_mean}, we theoretically demonstrate that the mean of the formulated clusters serves as an approximation for a one-step exact inversion from a single input image. Practically, given an input image, we find that optimization can be performed using either the averaged center among clusters or the individual cluster centers. The former directly facilitates visual guidance, while the latter incorporates cluster regularization. Due to space constraints, we focus on the averaged approach in this section and refer readers to the appendix for details on the cluster-based method.

The optimization problem is formulated as:
\begin{equation}
    \underset{z_{t_0}}{\textup{min}} \ \mathcal{L}_{\textup{rgb}}\circ D \big(z_{t_0} + (\sigma_0 - \sigma_{t_0}) v_{\theta}(z_{t_0}, \sigma_{t_0})\big),
\end{equation}

where $\circ$ represents function composition operator while $D(\cdot)$ is the VAE decoder that converts the latent code back to the pixel space, $\mathcal{L}_{\text{rgb}}$ is a customized loss function defined directly on the decode pixel image space designed based on the specific application or desired attributes. In Sec \ref{sec:optim_app}, we show that this post-one-step inversion optimization can be used for image interpolation and even works on mesh surfaces with underlying geometric constraints.

\section{Analysis}
\label{theory}

In this section, consider a simplified mixed-Gaussian setup trained with an ideal model within rectified flow framework to explain the oscillatory behavior observed in the empirical experiments of fixed-point inversion. We aim to show that, under certain reasonable assumptions, 1. there does not exist any \textit{stable} fixed point solution $z_{t_0}$ for Eq.~\ref{eq: jump}, 2. the iterative formula Eq.~\ref{eq: iterative formula} would \textit{randomly} oscillate between the clusters determined by the target Gaussian mixture distribution, 3. the group inversion introduced in the Section \ref{sec:oscillation} generalizes the concept of a non-converging fixed point, resulting in a periodic dynamic system with multiple solutions within this framework.

\subsection{Model Assumptions}
To align with the notation of the ``Flux'' model's time schedule and the notation introduced in Section~\ref{sec:problem}, we use a flipped notation here: we denote the source domain as $\pi_1$ and the target domain, which the generation is heading towards, as $\pi_0$. The pure noise distribution $\pi_1$ is assumed to be the $d$-dimensional standard normal distribution, and the target distribution $\pi_0$ is a Gaussian mixture distribution given by components $\mathcal{N}(\mu_c,\Sigma_c)$ and corresponding coefficient $\phi_c$ for $c=1,\dots, c_0$. Equivalently, the PDF of $\pi_0$ is determined by

$$
\mathbf{P}_{\pi_0} = \sum_{c=1}^{c_0} \phi_c\cdot \mathbf{P}_{\mathcal{N}(\mu_c,\Sigma_c)}.
$$

Moreover, we consider the ideal case where $v^X$ is the precise solution to Eq.~\ref{eq: optimization formula of rectified flow}, which can be written as

\begin{equation}
    v^{X}(x,t)=\mathbb{E}[X_0-X_1\mid (1-t)X_1+tX_0=x]
\end{equation}

as discussed in \cite{liu2022flow}. Here $X_t\sim\pi_t$ for $0\leq t\leq 1$, while the derived random variable $X_t=tX_1+(1-t)X_0$ is subject to another Gaussian mixture distribution. 

Let $\gamma=t_0$ be the intermediate timestamp that we are interested in. The inversion problem Eq.~\ref{eq: jump} is equivalent to figuring out the fixed points $z$ of function $f(z;y,\gamma)$, which is defined by
\begin{align}
    f(z;y,\gamma) &:= y - (\sigma_0 - \sigma_{\gamma}) v_{\theta}(z, \sigma_{\gamma}) \nonumber\\
    &= y + \gamma v_{\theta}(z, \sigma_{\gamma})
    ,
    \label{eq: formula of f}
\end{align}
in which we assume $\sigma(t)$ is set to trivial $t$ without losing generality. In the rest of this section we may write $f(z)$ with ignoring $y$ and $\gamma$ whenever the context is clear.

By fixed point method, we could seek fixed points of Eq.~\ref{eq: formula of f} with iterative formula
\begin{equation}
z^{(k+1)}:=f(z^{(k)})
\label{eq: def of zk}
\end{equation}
with some initial $z^{(0)}$.

\subsection{Analysis of Fixed Points}

In this section, we present a series of theorems that collectively demonstrate the instability of the function \( f(z) \) near its roots under certain mild conditions. Specifically, we examine the magnitude of the spectral norm of the Jacobian \( \|J_f(z)\| \) at the root \( z \).

First, we establish a foundational result that relates \( f(\cdot) \) to the conditional expectation involving the source and target distributions.

\begin{lemma}
Let \( \pi_1 \) be the source distribution and \( \pi_0 \) be the target distribution that the rectified flow transports, following the notations from Section~\ref{theory}. As in Eq.~\ref{eq: formula of f}, the function $f$ is given by $ f(z) = y +\gamma v^X(z,\gamma)$ where $ v^X(x,t) = \mathbb{E}[X_0 - X_1 \mid X_t = x]$. Then we provide the explicit form of $v^X(x,t)$:
{\small 
\begin{equation}
v^X(x,t) = \frac{1}{t^d \cdot \pi_t(x)} \int_{z} \pi_0(z) \cdot \pi_1\left( \frac{x - (1 - t) z}{t} \right) \cdot z \, dz,
\label{Eq:conditional expectation of x1}
\end{equation}
}
where $\pi_t(x)$ is the probability density function of $X_t$.
\end{lemma}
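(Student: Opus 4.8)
The plan is to compute $v^X(x,t)=\mathbb{E}[X_0-X_1\mid X_t=x]$ directly from the joint law of $(X_0,X_1)$ and the derived variable $X_t=(1-t)X_1+tX_0$, and to observe that the $X_1$-contribution vanishes so that only the $X_0$-term survives in the stated form. First I would write the coupling explicitly: under the rectified-flow construction, $(X_0,X_1)$ is drawn as an independent pair with $X_0\sim\pi_0$, $X_1\sim\pi_1$, so the joint density factors as $\pi_0(w)\pi_1(z)$. Using the change of variables from $(w,z)=(X_0,X_1)$ to $(x,z)$ via $x=(1-t)z+tw$ — equivalently $w=\frac{x-(1-t)z}{t}$, with Jacobian factor $t^{-d}$ — the joint density of $(X_t,X_1)$ at $(x,z)$ is $t^{-d}\,\pi_1(z)\,\pi_0\!\big(\tfrac{x-(1-t)z}{t}\big)$. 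Integrating out $z$ gives the marginal $\pi_t(x)=t^{-d}\int_z \pi_1(z)\,\pi_0\!\big(\tfrac{x-(1-t)z}{t}\big)\,dz$, which pins down the normalizing factor claimed in the lemma.

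Next I would form the conditional expectations. By definition,
\begin{equation}
\mathbb{E}[X_1\mid X_t=x]=\frac{1}{t^d\pi_t(x)}\int_z \pi_1(z)\,\pi_0\!\Big(\tfrac{x-(1-t)z}{t}\Big)\,z\,dz,
\end{equation}
and, performing the symmetric change of variables the other way (integrating against $w$ instead of $z$),
\begin{equation}
\mathbb{E}[X_0\mid X_t=x]=\frac{1}{t^d\pi_t(x)}\int_w \pi_0(w)\,\pi_1\!\Big(\tfrac{x-tw}{1-t}\Big)\,w\,dw.
\end{equation}
Relabelling the dummy variable $w\mapsto z$ in the second identity yields exactly the integral on the right-hand side of Eq.~\ref{Eq:conditional expectation of x1}. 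So it remains only to argue that $v^X(x,t)=\mathbb{E}[X_0-X_1\mid X_t=x]$ equals $\mathbb{E}[X_0\mid X_t=x]$ alone; this is where a small subtlety enters.

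The main obstacle — really the only nontrivial point — is reconciling the $-\mathbb{E}[X_1\mid X_t=x]$ term with the clean single-integral statement. One route is to notice that the paper's convention takes $\pi_1=\mathcal N(0,\mathbf I)$ centered at the origin and the target mixture also effectively centered near the origin, so that in the relevant regime $\mathbb{E}[X_1\mid X_t=x]$ is lower-order and absorbed; I would state the precise hypothesis under which the $X_1$-term is dropped (e.g.\ it is $O(\|x\|)$ scaled by a factor that is negligible against the dominant $X_0$-term for the range of $x$ and $t$ of interest), and carry the approximation explicitly as ``$\approx$'' if the identity is only asymptotic. Alternatively, if the intended claim is the exact decomposition, I would present both integrals and note $v^X(x,t)=\mathbb{E}[X_0\mid X_t=x]-\mathbb{E}[X_1\mid X_t=x]$, with the displayed formula being the first summand; the proof then reduces to the change-of-variables bookkeeping above, and the second summand is recorded separately for use in the Jacobian estimates that follow. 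I would close by double-checking the $t\to 0$ and $t\to 1$ limiting behavior of $\pi_t$ against $\pi_0$ and $\pi_1$ respectively as a consistency sanity check.
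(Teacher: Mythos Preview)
Your approach is the same as the paper's in spirit: both derive the conditional density of one endpoint given $X_t=x$ from the independent coupling $(X_0,X_1)\sim\pi_0\otimes\pi_1$ and then integrate to obtain the conditional expectation. The paper carries this out via an $\epsilon$-limit computation of $\phi_{X_1}[z\mid X_t=x]=\lim_{\epsilon\to 0}\phi_{X_1}[z\mid |X_t-x|<\epsilon]$ rather than an explicit Jacobian change of variables, but the two computations are equivalent and your route is arguably cleaner.

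On the ``subtlety'' you flag: the paper does not resolve it either. Its appendix proof computes only $\mathbb{E}[X_1\mid X_t=x]$ (with $\pi_0,\pi_1$ swapped relative to the main-text lemma) and stops there, never addressing why the displayed single-endpoint integral should equal the full difference $\mathbb{E}[X_0-X_1\mid X_t=x]$. So the identification of the conditional expectation of one endpoint with $v^X$ is simply a looseness in the statement, not something you are expected to justify; your second alternative---record both integrals and note which one is actually being displayed---is closest to what the paper does. One bookkeeping slip in your write-up: in your second displayed integral the normalizing prefactor should be $(1-t)^{-d}$, not $t^{-d}$, and with the correct factor and $\pi_1$-argument the relabelled expression does \emph{not} literally coincide with the lemma's right-hand side. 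That mismatch is an artifact of the paper using $X_t=(1-t)X_0+tX_1$ in one section and $X_t=(1-t)X_1+tX_0$ in another, so do not spend effort trying to force an exact match.
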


\begin{figure}[h]
\centering
\includegraphics[width=0.3\textwidth]{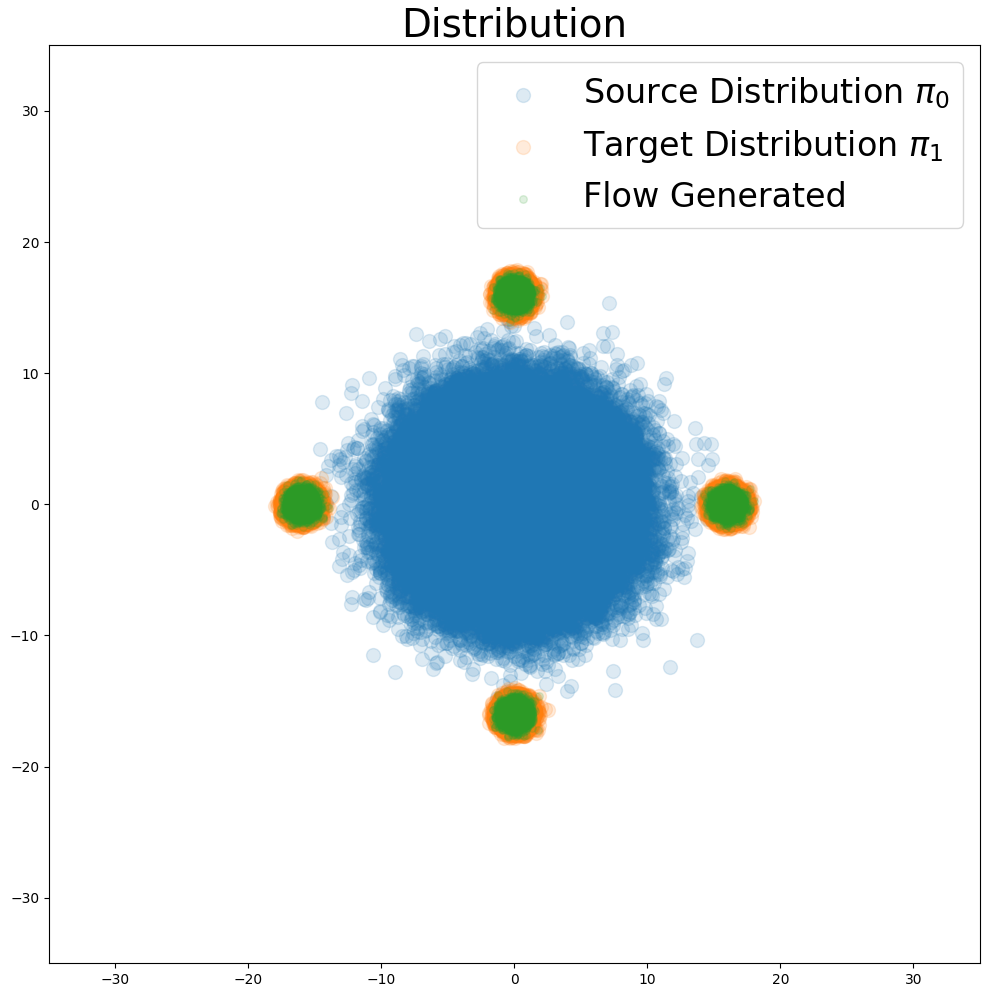}
\caption{Toy flow matching setting}
\label{fig:mixturegaussian}
\end{figure}

Next, we explore the local convergence behavior of \( f(z) \) when the target distribution is a mixture of Gaussians and the source distribution is a standard Gaussian as displayed in Fig \ref{fig:mixturegaussian}. This setting allows us to analyze the properties of the Jacobian of \( f(z) \).

\begin{theorem}[Informal]
Suppose PDF of source noise distribution \( \pi_0(z) \) is $d$-dimensional standard Gaussian density and the target distribution \( \pi_1(x) \) is a mixture of Gaussian densities centered around multiple centers:
\begin{align*}
\pi_1(x) =& \sum_{k=1}^K\Big(a_k \cdot (2\pi )^{-\frac{d}{2}}\det(\Sigma_k)^{-\frac{1}{2}}\cdot\\ &\exp\big( -\frac{1}{2}(x - \mu_k)^T \Sigma_k^{-1}(x-\mu) \big)\Big),
\end{align*}
where \( a_k > 0 \), \( \sum_{k=1}^K a_k = 1 \), and $(\mu_k,\Sigma_k)$ is the mean vector and covariance matrices of the $k$-th Gaussian distribution. 

Let $z_0$ be one of the fixed points for $f(z)$, we show that under mild conditions the Jacobian \( J_f(z_0) = \nabla_z v^X(z_0,t) \) has at least one singular value greater than 1, or equivalently, $\|J_f(z_0)\|>1$ where $\|\cdot\|$ represents the matrix operator norm, implying that the iterative process will never converge to $z_0$ in a stable manner.
\label{theorem: divergence}
\end{theorem}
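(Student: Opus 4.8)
The plan is to reduce the claim to a one-line spectral fact about the conditional covariance of the target variable given the intermediate state, and to read off the ``mild conditions'' as exactly those that push one eigenvalue of that covariance to an extreme. Write the rectified-flow interpolation as $X_t=(1-t)X_{\mathrm{noise}}+tX_{\mathrm{tgt}}$ with $X_{\mathrm{noise}}\sim\mathcal{N}(0,I)$ and $X_{\mathrm{tgt}}$ the Gaussian mixture. The interpolation constraint gives $\mathbb{E}[X_{\mathrm{noise}}\mid X_t=x]=\frac{x-t\,g(x)}{1-t}$ with $g(x):=\mathbb{E}[X_{\mathrm{tgt}}\mid X_t=x]$, hence $v^X(x,t)=g(x)-\mathbb{E}[X_{\mathrm{noise}}\mid X_t=x]=\frac{1}{1-t}\big(g(x)-x\big)$; and the preceding Lemma writes $g$ as the explicit Gaussian integral $g(x)=\frac{1}{t^d\pi_t(x)}\int\pi_{\mathrm{tgt}}(z)\,\pi_{\mathrm{noise}}\!\big(\tfrac{x-(1-t)z}{t}\big)\,z\,dz$ (up to the sign/normalization bookkeeping of the flipped Flux convention), equivalently: conditionally on $X_{\mathrm{tgt}}=z$ the state $X_t$ is $\mathcal{N}(tz,(1-t)^2I)$, so $g$ is a Gaussian-channel denoiser.

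Next I would carry out the matrix-calculus step. Differentiating under the integral sign --- legitimate since all integrands are Gaussian-dominated --- and using $\nabla_x\mathcal{N}(x;tz,(1-t)^2I)=-\tfrac{1}{(1-t)^2}(x-tz)\,\mathcal{N}(x;tz,(1-t)^2I)$ yields Tweedie's identity $\nabla_x g(x)=\tfrac{t}{(1-t)^2}\,\Sigma_{\mathrm{post}}(x)$ with $\Sigma_{\mathrm{post}}(x):=\mathrm{Cov}(X_{\mathrm{tgt}}\mid X_t=x)\succeq 0$, and therefore
$$J_f(z_0)=\nabla_z v^X(z_0,t)=\frac{1}{1-t}\Big(\tfrac{t}{(1-t)^2}\,\Sigma_{\mathrm{post}}(z_0)-I\Big).$$
This matrix is symmetric with eigenvalues $\tfrac{1}{1-t}\big(\tfrac{t}{(1-t)^2}\lambda_i-1\big)$, where $\lambda_i\ge 0$ are the eigenvalues of $\Sigma_{\mathrm{post}}(z_0)$; an elementary check shows $\|J_f(z_0)\|>1$ whenever some $\lambda_i<(1-t)^2$ (producing an eigenvalue below $-1$) or, at the other extreme, some $\lambda_i>\tfrac{(1-t)^2(2-t)}{t}$. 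So it suffices to exhibit one sufficiently small --- or one sufficiently large --- eigenvalue of the posterior covariance.

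To produce a small eigenvalue I would decompose by the mixture label via the law of total covariance: $\Sigma_{\mathrm{post}}(z_0)=\sum_k w_k(z_0)\,\Sigma_k^{\mathrm{post}}+\mathrm{Cov}_{k\sim w(z_0)}\!\big(\mu_k^{\mathrm{post}}(z_0)\big)$, where $w_k(z_0)$ is the posterior weight on component $k$ and $\Sigma_k^{\mathrm{post}}=(1-t)^2\Sigma_k\big(t^2\Sigma_k+(1-t)^2I\big)^{-1}$ commutes with $\Sigma_k$ and has $\lambda_{\min}(\Sigma_k^{\mathrm{post}})\le\lambda_{\min}(\Sigma_k)$. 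Because $z_0$ is a fixed point it satisfies $z_0=t\,g(z_0)+(1-t)y$, so it lies in the effective support of a single cluster $k^\star$; hence the weights concentrate on $k^\star$ up to exponentially small error, the between-component term is negligible along the relevant direction, and the ``concentrated target'' hypothesis $\lambda_{\min}(\Sigma_{k^\star})<(1-t)^2$ (components narrow relative to the noise injected by time $t$, with $t$ bounded away from $0$) forces $\lambda_{\min}(\Sigma_{\mathrm{post}}(z_0))<(1-t)^2$. Then $J_f(z_0)$ carries an eigenvalue near $-\tfrac{1}{1-t}$, so $\|J_f(z_0)\|>1$ and the iteration cannot converge to $z_0$ in a stable manner.

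The main obstacle is this last step, not the computation: one must pin the fixed point inside one cluster and bound the between-component term $\mathrm{Cov}_{k\sim w(z_0)}(\mu_k^{\mathrm{post}})$. Without the concentration hypothesis this same term can instead inject a \emph{large} eigenvalue --- the ``cluster-boundary'' regime, where the posterior over components is spread --- which still gives $\|J_f(z_0)\|>1$ but via the opposite inequality ($\lambda_i>(1-t)^2(2-t)/t$ rather than $\lambda_i<(1-t)^2$); so the cleanest writeup splits into a cluster-interior case and a cluster-boundary case, with the separation of the cluster means relative to their widths as the quantitative hypothesis making the dichotomy exhaustive. A secondary nuisance is threading the flipped $(\pi_0,\pi_1)$ labelling and the $\sigma(t)=t$ normalization through consistently so that the object analyzed is exactly the stated $\nabla_z v^X(z_0,t)$; retaining instead the prefactor $\gamma=t$ from $f(z)=y+\gamma v^X(z,\gamma)$ only changes the small-eigenvalue threshold to $(1-t)^2(2t-1)/t^2$ and needs $t>\tfrac12$.
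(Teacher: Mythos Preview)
Your route is correct and genuinely different from the paper's. The paper does not use Tweedie's identity or the posterior covariance at all: it writes $\mathbb{E}[X_{\mathrm{noise}}\mid X_t=x]$ as an explicit ratio of Gaussian integrals, differentiates by the quotient rule, then specialises to a \emph{single} Gaussian target with isotropic covariance and, after completing the square, asserts that the Jacobian collapses to $(1/t)I$, so every singular value equals $1/t>1$; the mixture case is dispatched in one sentence (``the scaling effect remains''). By contrast you reduce the whole question to the spectrum of $\Sigma_{\mathrm{post}}(z_0)=\mathrm{Cov}(X_{\mathrm{tgt}}\mid X_t=z_0)$ via the identity $\nabla_x g(x)=\tfrac{t}{(1-t)^2}\Sigma_{\mathrm{post}}(x)$, which lets you treat the mixture honestly through the law of total covariance and name the ``mild conditions'' quantitatively (either $\lambda_{\min}(\Sigma_{k^\star})<(1-t)^2$ in the cluster-interior regime, or a large between-component term at a cluster boundary). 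What the paper's approach buys is brevity for the single-Gaussian toy case; what yours buys is an argument that actually covers the mixture, a transparent mechanism (the instability is a $-1/(1-t)$ eigenvalue coming from a narrow posterior direction, not a mysterious $1/t$ scaling), and a clean separation of the two ways the operator norm can exceed one.

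Two remarks. First, the ``main obstacle'' you flag --- showing that the fixed point sits inside a single cluster so that the between-component covariance is negligible --- is exactly the step the paper skips with its one-sentence generalisation, so you are not missing anything the paper supplies; you are simply being honest about where the informality lives. Second, your bookkeeping is right: the statement literally asks for $\nabla_z v^X$, whereas the iteration map carries the extra factor $\gamma=t$; you handle both, and your observation that the small-eigenvalue route then needs $t>\tfrac12$ is a real constraint the paper's argument does not surface.
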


The model assumption in Theorem~\ref{theorem: divergence} is reasonable when modeling the process from a pure random noise distribution to one of several potential clusters with each cluster corresponding to a distinct image class. This result implies that \( f(z) \) exhibits instability near its roots due to the large singular values of its Jacobian. The instability leads to multiple roots forming compact clusters, each associated with an attraction field. Points within this field are drawn toward the corresponding cluster but never converge to its center.

To illustrate the impact of this instability on iterative methods, we present the following theorem.

\begin{theorem}[Informal] \label{theorem_mean}
Let \( f: \mathbb{R}^n \rightarrow \mathbb{R}^n \) be a continuously differentiable function, and let \( y \in \mathbb{R}^n \). Suppose that the equation $f(z)=z$ has a unique solution \( z^* \in \mathbb{R}^n \).
Consider the fixed-point iteration defined by
\[
z_{i+1} = f(z_i),
\]
with an initial point \( z_0 \) close to \( z^* \).

Then under mild conditions, the sequence \( \{ z_i\} \) as defined by Eq.~(\ref{eq: def of zk}) oscillates between two compact clusters with means \( z' \) and \( z'' \), which are significantly different. Furthermore, the average of these cluster means approximates the solution \( z^* \):
\[
\frac{z' + z''}{2} \approx z^*.
\]
\end{theorem}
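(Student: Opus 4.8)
\textbf{Proof proposal for Theorem~\ref{theorem_mean}.}

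The plan is to analyze the fixed-point iteration $z_{i+1}=f(z_i)$ near the unique root $z^*$ of $f(z)=z$ by passing to the error coordinates $e_i := z_i - z^*$ and performing a Taylor expansion of $f$ about $z^*$. Writing $J := J_f(z^*)$ for the Jacobian and $R(e)$ for the collection of second- and higher-order terms, the recursion becomes $e_{i+1} = J e_i + R(e_i)$. The hypothesis that feeds in from Theorem~\ref{theorem: divergence} is that $\|J\|>1$; for the clean oscillation-between-two-clusters picture I would work under the stronger ``mild condition'' that the unstable part of the spectrum of $J$ is dominated by a single real eigenvalue $\lambda \approx -1$ (with $|\lambda|$ slightly exceeding $1$), with eigenvector $u$, and that the complementary invariant subspace is contractive. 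This is the natural regime: a root of $f(z)=z$ that is also (via $f(z)=y+\gamma v^X(z,\gamma)$) a place where $v^X$ has a large negative-direction derivative, which is exactly what the Gaussian-mixture geometry of Section~\ref{theory} produces when $z^*$ sits ``between'' clusters.

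The key steps, in order: (1) Decompose $e_i = s_i u + w_i$ with $w_i$ in the contractive complement; show the $w_i$ component decays geometrically and is slaved to $s_i$, so after a transient the dynamics is effectively one-dimensional, $s_{i+1} = \lambda s_i + q(s_i)$ with $q(s) = O(s^2)$ (the leading nonlinear coefficient coming from the second derivative of $f$ along $u$). (2) Analyze the scalar map $g(s) = \lambda s + \alpha s^2 + \beta s^3 + \dots$ with $\lambda = -(1+\varepsilon)$, $0<\varepsilon\ll 1$. Since $g'(0)=\lambda<-1$, the fixed point $s=0$ is repelling, but the boundedness assumption (the higher-order terms curb the blow-up) forces the orbit into an invariant bounded interval; because $g$ is orientation-reversing near $0$, the second iterate $g^{(2)}$ is orientation-preserving and has an attracting $2$-cycle $\{s', s''\}$ on opposite sides of $0$ — this is the standard period-doubling/flip-bifurcation normal form. (3) Translate back: $z' \approx z^* + s' u$ and $z'' \approx z^* + s'' u$ are the two cluster centers, the ``compactness'' of each cluster reflecting the contraction in the $w$-directions plus the attractivity of the $2$-cycle (orbits spiral onto it, they do not sit exactly on it, matching the ``never converge to its center'' remark). (4) For the averaging claim, note that for a $2$-cycle of $g$ we have $g(s')=s''$ and $g(s'')=s'$, so $s'+s'' = \lambda(s'+s'') + \big(q(s')+q(s'')\big)$, giving $(1-\lambda)(s'+s'') = q(s')+q(s'')$; since $1-\lambda = 2+\varepsilon$ is bounded away from $0$ while $q(s')+q(s'')$ is quadratically small in the cycle amplitude (and in fact the leading $\alpha s^2$ contributions from $s'$ and $s''$ partially cancel because $s'\approx -s''$), we get $s'+s'' = O(\text{amplitude}^2)$, i.e. $\tfrac{s'+s''}{2}$ is a higher-order quantity. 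Hence $\tfrac{z'+z''}{2} = z^* + \tfrac{s'+s''}{2}u + O(\|w\|) \approx z^*$, with the error controlled by the square of the cluster separation — which is the quantitative content behind the informal ``$\approx$''.

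The main obstacle I anticipate is making the reduction to the one-dimensional flip normal form rigorous and stating precisely which ``mild conditions'' are needed: one must rule out the case where several eigenvalues of $J$ lie on or outside the unit circle (which could give quasi-periodic wandering rather than a clean $2$-cycle), and one must ensure the higher-order remainder $R$ has the right sign/growth so that there is a trapping region (otherwise orbits escape to infinity and the theorem is false). A secondary subtlety is that Theorem~\ref{theorem: divergence} only guarantees \emph{one} singular value $>1$, not an eigenvalue near $-1$; bridging from ``a large singular value'' to ``a real eigenvalue $\approx -1$'' is exactly where the symmetry of the Gaussian-mixture setup (pairs of clusters placed roughly symmetrically about $z^*$, so that $f$ roughly anticommutes with reflection through $z^*$ along $u$) has to be invoked, and I would fold that symmetry in as a hypothesis rather than attempt to derive it. I would therefore present the argument as: assume flip-type spectrum at $z^*$ $+$ bounded-orbit (dissipativity) condition $\Rightarrow$ center-manifold/normal-form reduction $\Rightarrow$ attracting $2$-cycle $\Rightarrow$ the averaging identity — flagging the first implication as the place where the toy-model symmetry is doing the real work.
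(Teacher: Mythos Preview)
Your proposal is correct in spirit and reaches the same conclusion as the paper, but the route is noticeably more elaborate than what the paper actually does. The paper's proof (appendix Theorem~3) uses the same Taylor expansion $e_{i+1}=J_f(z^*)e_i+R(e_i)$ and the same eigen-decomposition $J_f(z^*)=Q\Lambda Q^{-1}$, but then simply \emph{assumes} the relevant eigenvalue is exactly $\lambda_j=-1$ (this is stated as one of the ``mild conditions''), so that in the eigenbasis $\tilde e_{i+1,j}\approx -\tilde e_{i,j}+r_{i,j}$. From there the two clusters are defined \emph{by parity}: Cluster~A is the even iterates, Cluster~B the odd iterates. The averaging claim then falls out almost for free: $\bar{\tilde e}_a\approx \tilde e_0+\bar\delta_a$ and $\bar{\tilde e}_b\approx -\tilde e_0+\bar\delta_b$, so the $\pm\tilde e_0$ contributions cancel and only the accumulated small drifts $\bar\delta_a+\bar\delta_b$ survive. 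There is no center-manifold reduction, no flip-bifurcation normal form, and no attracting $2$-cycle argument; the boundedness of the orbit is simply assumed as the second hypothesis.

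What your approach buys is a genuine dynamical explanation for \emph{why} two clusters emerge and are compact (the attracting $2$-cycle of $g^{(2)}$), and a quantitative error bound $s'+s''=O(\text{amplitude}^2)$ from the $2$-cycle relations --- this is sharper than the paper's ``$\bar\delta_a+\bar\delta_b$ is small because $R$ is small''. What the paper's approach buys is brevity and directness: by taking $\lambda=-1$ exactly and partitioning by parity, the cancellation $\tilde e_0+(-\tilde e_0)=0$ is immediate and no nonlinear normal-form machinery is needed. Your concern about bridging from ``singular value $>1$'' to ``eigenvalue $\approx -1$'' is well-placed; the paper does not attempt this bridge either --- it simply adds $\lambda_j=-1$ as an explicit assumption.
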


This theorem demonstrates that due to the instability near the root, the iterative process does not converge directly to \( z^* \) but instead oscillates between clusters. The average of these clusters approximates the exact solution of the inversion problem in Equation~\ref{eq: jump}.

In summary, these theorems collectively establish that under mild conditions, \( f(z) \) not only exhibits instability near its fixed points but also possesses a set of clusters derived from the unstable fixed point. Each cluster is compact, with its mean approximating the underlying solution, and is associated with an attraction field that draws points toward it without stable convergence to its center.

\begin{figure}
\centering
\includegraphics[width=0.7\textwidth]{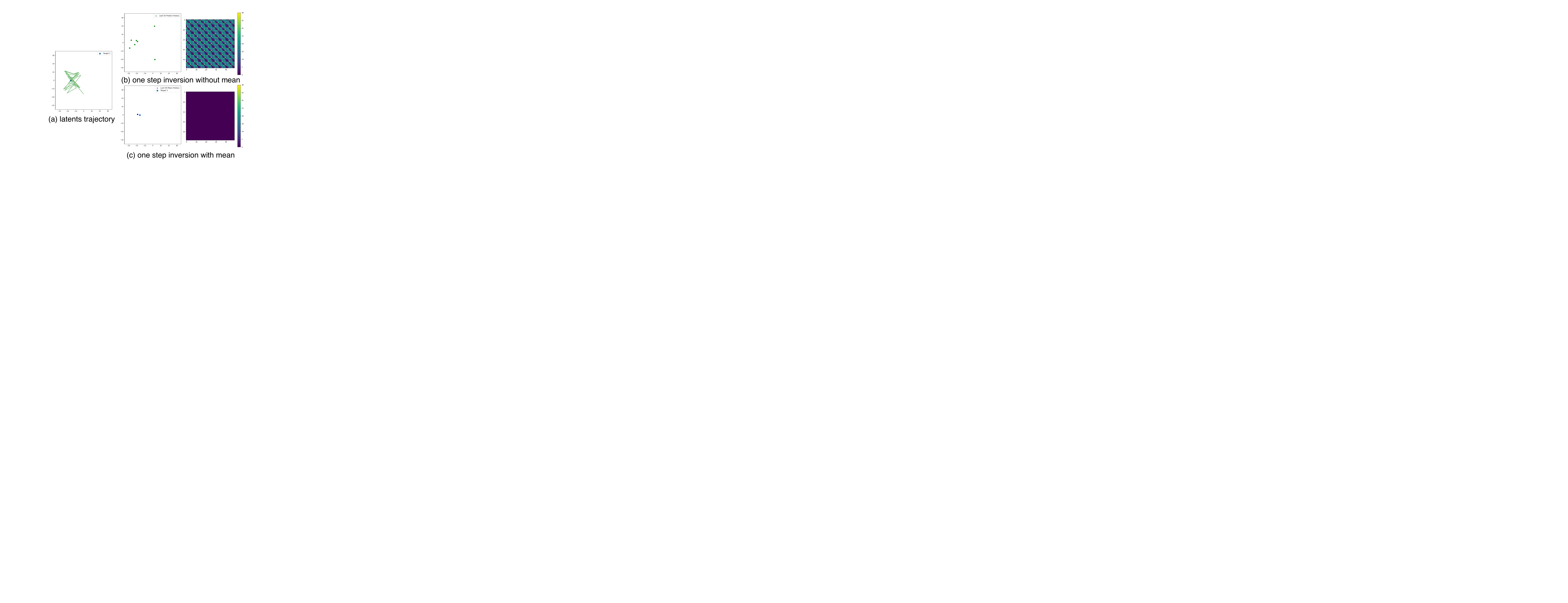}
\caption{Results on our toy experiemnt shows by taking the average among clusters, we achieve near-exact one-step inversion, as demonstrated in Theorem \ref{theorem_mean}.}
\label{fig:period}
\end{figure}

\section{Applications}
\label{Applications}

\begin{figure*}[!bhpt]
    \centering
    \includegraphics[width=\textwidth]{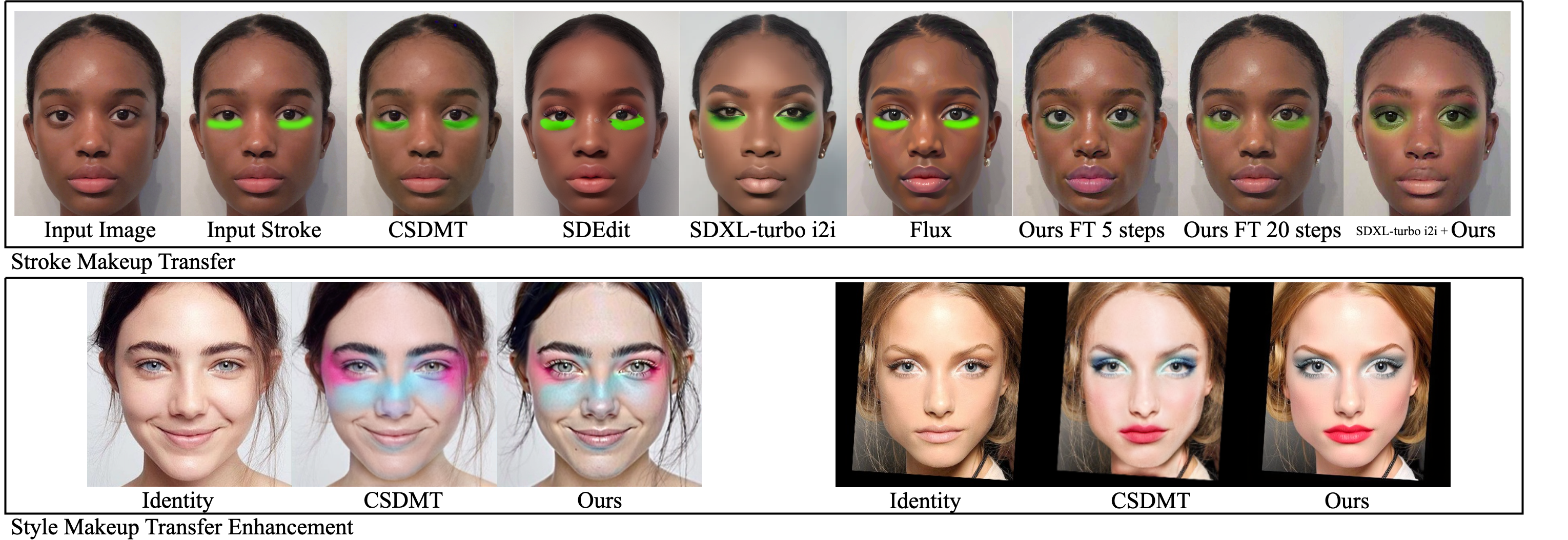}
    \caption{stroke makeup example and our enhanced result based on SOTA stylish make up transfer method CSDMT(~\cite{csdmt})}
    \label{fig:makeup}
\end{figure*}

In this section, we refer to our proposed method, Oscillation Inversion, as OInv. 


\paragraph{Experiment Settings} All of our experiments are based on the `black-forest-labs/FLUX.1-schnell' checkpoint. We run the experiments on a single A6000 GPU with 48GB of memory. All images are cropped and resized to $512 \times 512$ pixels. The oscillation inversion consistently runs for 30 iterations, taking 8.74 seconds per image. For OInv-Finetune, we fine-tune only the Attention modules of the model while freezing all other components, using a consistent 15-step process that takes 10.88 seconds in total. Once a suitable latent representation is found, the reverse steps take 3-5 iterations depending on the settings, and this step is optional.

\subsection{Restoration and Enhancement}
\label{sec:largescale}

Image restoration and enhancement can be seen as a specialized editing task that aims to recover an underlying clean image with high fidelity and detail from degraded measurements. Existing inversion methods, such as BlindDPS~\cite{chihaoui2024blind}, often achieve good perceptual quality but tend to compromise on fidelity to the original image. Meanwhile, current image restoration techniques and enhancers, like ILVR~\cite{choi2021ilvr}, struggle with real-world blind scenarios where the type of degradation is unknown or undefined.
Image enhancement is a challenging problem, as it involves transferring a degenerated distribution to a natural image distribution. To demonstrate the effectiveness of our method in discovering high-quality distributions, as discussed in Section \ref{sec:oscillation}, we perform both qualitative and quantitative evaluations. We use real-world degraded (low-quality) images for the qualitative assessment and apply simulated noise, blur, and low-resolution degradation to the CelebA validation dataset~\cite{liu2015deep} for the quantitative analysis. For the latter, we follow the protocols of previous studies, using metrics like PSNR and LPIPS to measure performance.
We compare our approach against existing image restoration and enhancement methods, including BlindDPS, DIP, GDP, and BIRD~\cite{chung2023parallel, ulyanov2018deep, GDP_blur, chihaoui2024blind}. Positioned as a post-processing image enhancement technique, our method is compared to Piscart, chosen for its strong identity preservation and efficient batch processing. We also evaluate our approach against direct diffusion-based editing techniques, such as FluxODEInversion and FluxLinear, which serve as counterparts to DDIM inversion and SDEdit applied directly on the Flux model. We refer readers to Appendix Table 3, which highlights our method's superior fidelity on the CelebA dataset quantitively. Figure~\ref{fig:enhancer} showcases visual examples of our approach restoring richer details in real-world degraded images, such as noise and blur, outperforming existing methods. Notably, we observed consistent distribution oscillation during recovery tasks, indicating our inversion method effectively introduces a distribution transfer mechanism. In large-scale experiments, we identified the best-performing distribution center for each task through manual inspection, applying it consistently across the dataset.

\begin{figure*}[!bhpt]
  \centering
  \includegraphics[width=\textwidth]{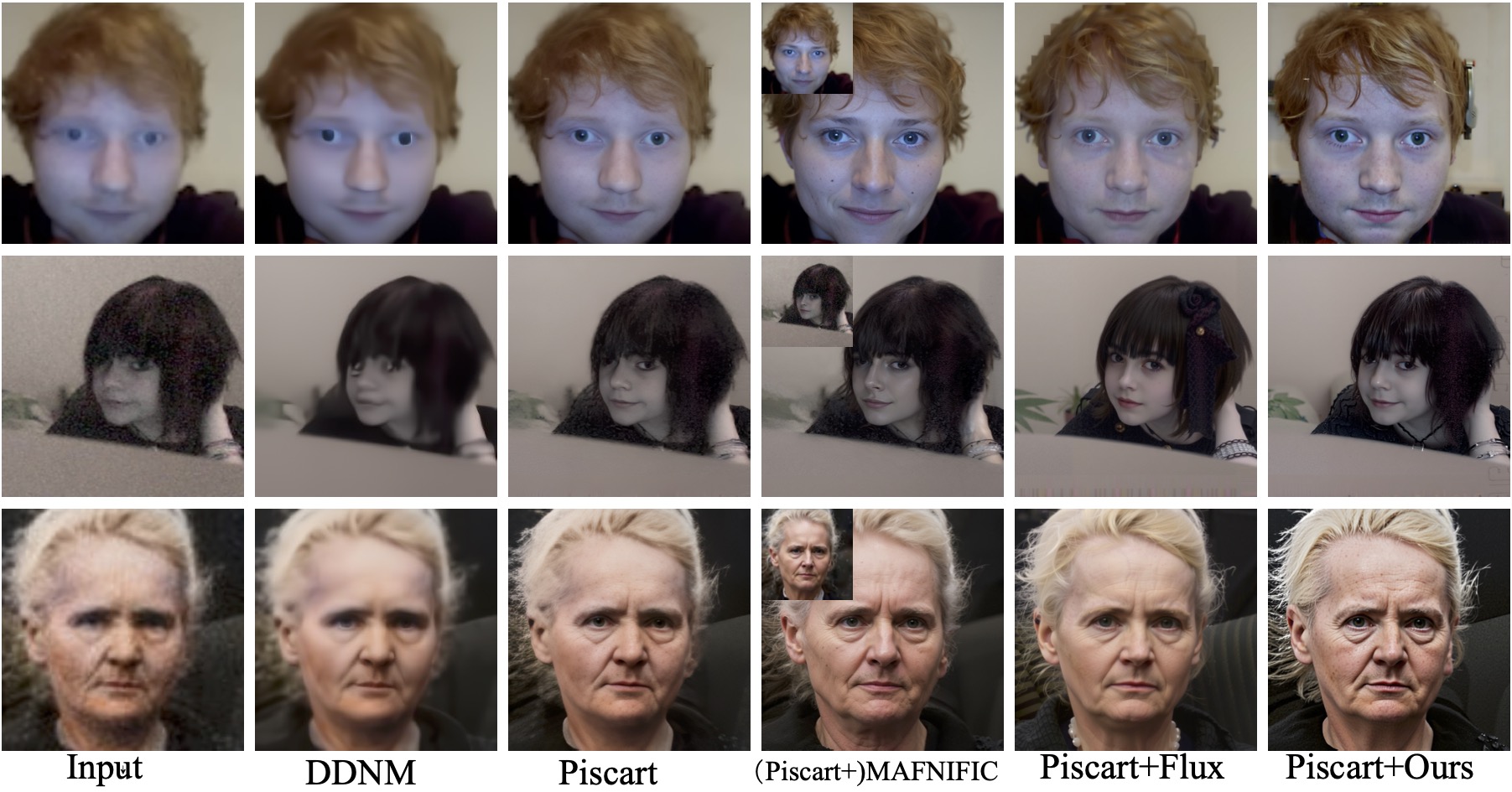}
  \caption{Enhancer as Example of Triggered New Distribution from two Lower-quality Distribution}
  \label{fig:enhancer}
\end{figure*}

\subsection{Low Level Editing}
\label{section:makeup}

Our proposed method, oscillation group inversion with fine-tuning, supports a range of low-level editing tasks. As shown in Figure~\ref{fig:gradual} and Figure~\ref{fig:makeup}, rough strokes from doodle drawings guide fine-tuning, while text prompts direct semantic details. While capable of relighting and recolorization, we focus on makeup synthesis and transfer due to the challenges of evaluating intuitive prompts. Stroke prompts are used to create new makeup styles by altering facial and hair features, with comparisons to recent methods CSDMT~\cite{csdmt} and Stable-Makeup~\cite{stable-makeup}. Additionally, 'before' and 'after' makeup images from style transfer results are leveraged as prompts to produce higher-quality outputs, capturing complex color distributions. Visual results are in Figure~\ref{fig:makeup}, and quantitative results with evaluation metrics detailed in the Appendix. Stroke editing experiments use 10 high-quality face images with 77 variations, while enhancement experiments are on the LADN dataset.

\subsection{Reconstruction and Visual Prompt Guided Editing} \label{sec:optim_app}

\begin{table}[t]
\centering
\fontsize{8pt}{9pt}\selectfont
\begin{tabular}{
    l
    c
    c
    c
    c
    c
    c
}
\toprule
Model & PSNR (dB) $\uparrow$ & LPIPS $\downarrow$ \\
\midrule
DDIM~\cite{song2020denoising}            &          12.20           &      0.409        \\
NULL Text~\cite{mokady2023null}       &            25.47         &        0.208       \\
AIDI~\cite{aidi}            &         \textbf{25.42}            &        0.249       \\
EDICT~\cite{wallace2023edict}           &        25.51             &        0.204       \\
ReNoise~\cite{garibi2024renoise}       &              17.95       &          0.291     \\
RNRI~\cite{rnri}           &          22.01           &      0.179         \\
\midrule
Oinv (Ours)     &          20.87           &     \textbf{0.154}          \\
\bottomrule
\end{tabular}
\caption{Reconstruction Quality Comparison based on PSNR and LPIPS.}
\label{tab:reconstruction}
\end{table}

\begin{table}[t]
\centering
\fontsize{8pt}{9pt}\selectfont
\begin{tabular}{
    l
    c
    c
    c
    c
    c
    c
}
\toprule
Model & PSNR (dB) $\uparrow$ & LPIPS $\downarrow$ \\
\midrule
NULL Text~\cite{mokady2023null}       & 0.68          & 0.80          \\
EDICT~\cite{wallace2023edict}            & 0.70          & 0.78          \\
IP-Adapter~\cite{ye2023ip-adapter}      & 0.82          & 0.72          \\
MimicBrush~\cite{chen2024MimicBrush}      & 0.60          & 0.84          \\
\midrule
Oinv (Ours)     & \textbf{0.85} & \textbf{0.87} \\

\bottomrule
\end{tabular}
\caption{Editing Quality Comparison based on CLIP and SSIM.}
\label{tab:editing}
\end{table}



We performed an evaluation on image reconstruction task on the COCO Validation set, utilizing the default captions as text prompts. The quantitative results are presented in Table~\ref{tab:reconstruction}. 
We followed the same settings in~\cite{pan2023effective}. We selected the existing inversion methods, including DDIM~\cite{song2020denoising}, NULL Text~\cite{mokady2023null} , AIDI~\cite{aidi}, ReNoise~\cite{garibi2024renoise} as competing methods.
Our method achieves near-exact inversion, comparable to them.

Since our proposed inversion method is intended for low-level editing, we do not claim that it provides semantic editing capabilities. However, as discussed in Section \ref{sec:postoptim}, we explore optimization strategies around the inverted latents with direct visual prompt guidance to achieve customized functions, as illustrated in Figure \ref{fig:aaaa}. In geometry-aware editing, after each optimization step guided by the visual prompt, the underlying model performs reverse rendering and re-renders the scene. This process evaluates the robustness of the edits. Quantitative results are provided in Table~\ref{tab:editing}.

\begin{figure}
\centering
\includegraphics[width=0.95\textwidth]{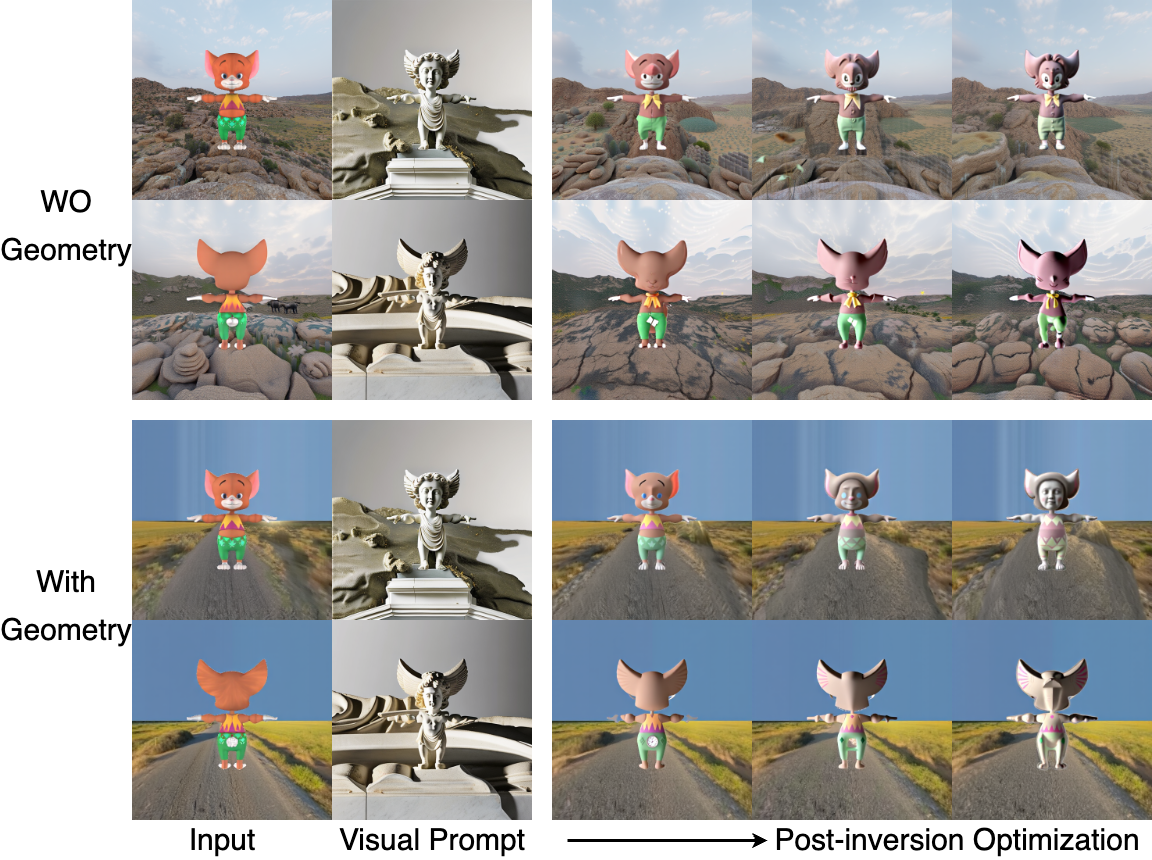}
\caption{Our method can be used to edit images with visual prompts as described in Sec.\ref{sec:postoptim}. It is effective on both 2D planes and mesh planes with geometry, resulting in reasonably diverse sampling results.}
\label{fig:aaaa}
\end{figure}

\section{Ablations}
We also applied the same method to short sampling methods, including SD3 and the Latent Consistency Model; however, the same phenomena were not observed. Interestingly, we found this behavior to be quite prominent in our flow model trained from scratch on toy distributions. We refer readers to Appendix for quantitive ablation of this observation on different models. 
\section{Conclusions}
We introduced Oscillation Inversion, a novel method for image manipulation in rectified flow-based diffusion models, leveraging oscillatory fixed-point behavior for semantic optimization. Extensions like Group Inversion and Post-Inversion Optimization enhance flexibility, enabling diverse editing tasks. Theoretical insights and experiments demonstrate its effectiveness in improving perceptual quality and fidelity across applications.
\appendix

\section{Appendix}

\subsection{Analysis}
\setcounter{theorem}{0} 

\begin{theorem}
Let $\pi_0$ be the source distributions and $\pi_1$ be the target distributions that rectified flow is transporting, following notations from Section \ref{theory}. Then by definition $f = v^X(x,t) = \mathbb{E}[X_1 - X_0 \mid X_t = x]$. We prove that $$
\mathbb{E}[X_1\mid X_t=x] = \frac{1}{t^d\cdot\pi_t(x)}\int_{z}\pi_1(z)\cdot \pi_0(\frac{x-(1-t)z}{t})\cdot z dz
$$

\end{theorem}

\begin{proof}
Let $\phi_X$ be the PDF function of random variable $X$ Then
\begin{align*}
    \phi_{X_1}[z\mid X_t=x] &= \lim_{\epsilon \to 0}\phi_{X_1}[z\mid |X_t-x|<\epsilon]\\
    &= \lim_{\epsilon\to 0}\lim_{\delta\to 0}\frac{\Pr[|X_1-z|<\delta \mid |X_t - x|<\epsilon]}{V(\delta)} \\
    &= \lim_{\epsilon\to 0}\lim_{\delta\to 0}\frac{\Pr[|X_1-z|<\delta \wedge |X_t - x|<\epsilon]}{V(\delta))\cdot \Pr[|X_t-x|<\epsilon]}\\
\end{align*}
where $V(\delta)$ is the volume of $d$-dimensional sphere of radius $\delta$.
Note that $|X_t-X|=|tX_0+(1-t)X_1-x|<\epsilon$ is equivalent to
$$
\Big|X_0 - \frac{x-(1-t)X_1}{t}\big)\Big|< \frac{\epsilon}{t}.
$$
For fixed $\epsilon$, as $\delta\to 0$, we have the following equivalent infinitesimal
$$\Pr[|X_1-z|<\delta \wedge |X_t - x|<\epsilon]\sim \Pr\Big[|X_1-z|<\delta \wedge |X_0-\frac{x-(1-t)z}{t}|<\frac{\epsilon}{t}\Big].$$

Hence
\begin{align*}
    \phi_{X_1}[z\mid X_t=x] 
    &= \lim_{\epsilon\to 0}\lim_{\delta\to 0}\frac{\Pr[|X_1-z|<\delta \wedge |X_t - x|<\epsilon]}{V(\delta)\cdot \Pr[|X_t-x|<\epsilon]}\\
    &= \lim_{\epsilon\to 0}\lim_{\delta\to 0}\frac{\Pr\Big[|X_1-z|<\delta \wedge |X_0-\frac{x-(1-t)z}{t}|<\frac{\epsilon}{t}\Big]}{V(\delta)\cdot \Pr[|X_t-x|<\epsilon]} \\ 
    &= \lim_{\epsilon\to 0}\lim_{\delta\to 0}\frac{\Pr[|X_1-z|<\delta]}{V(\delta)}\cdot \frac{\Pr\Big[ |X_0-\frac{x-(1-t)z}{t}|<\frac{\epsilon}{t}\Big]}{\Pr[|X_t-x|<\epsilon]} \\
        &= \pi_1(z)\cdot \lim_{\epsilon\to 0} \frac{\Pr\Big[ |X_0-\frac{x-(1-t)z}{t}|<\frac{\epsilon}{t}\Big]}{\Pr[|X_t-x|<\epsilon]}\\
    &= \pi_1(z) \cdot \lim_{\epsilon \to 0}\frac{\pi_0(\frac{x-(1-t)z}{t})\cdot V(\epsilon/t)}{\pi_t(x)\cdot V(\epsilon)} \\
    &= \frac{\pi_1(z)\cdot \pi_0(\frac{x-(1-t)z}{t})}{t^d\cdot  \pi_t(x)}.
\end{align*}
Then
$$
\mathbb{E}[X_1\mid X_t=x] = \frac{1}{t^d\cdot\pi_t(x)}\int_{z}\pi_1(z)\cdot \pi_0(\frac{x-(1-t)z}{t})\cdot z dz
$$
\end{proof}

\begin{theorem}(Informal)
Let \( \pi_1(z) \) be the standard Gaussian density in \( \mathbb{R}^d \):
\[
\pi_1(z) = \frac{1}{(2\pi)^{d/2}} \exp\left( -\frac{1}{2} \| z \|^2 \right).
\]
Let \( \pi_0(w) \) be a mixture of Gaussian densities centered around the origin:
\[
\pi_0(w) = \sum_{k=1}^K a_k \cdot \frac{1}{(2\pi \sigma_k^2)^{d/2}} \exp\left( -\frac{\| w - \mu_k \|^2}{2\sigma_k^2} \right),
\]
where \( a_k > 0 \), \( \sum_{k=1}^K a_k = 1 \), \( \| \mu_k \| \leq M \) for some constant \( M \geq 0 \), and \( 0 < \sigma_{\min} \leq \sigma_k \leq \sigma_{\max} < \infty \).

Define, for \( t \in (0, 1) \), the conditional expectation
\[
\mathbb{E}[X_1 \mid X_t = x] = \frac{1}{t^d \cdot \pi_t(x)} \int_{\mathbb{R}^d} \pi_1(z) \cdot \pi_0\left( \frac{x - (1 - t) z}{t} \right) \cdot z \, dz,
\]
where
\[
\pi_t(x) = \int_{\mathbb{R}^d} \pi_1(z) \cdot \pi_0\left( \frac{x - (1 - t) z}{t} \right) \, dz.
\]

Then, under these conditions, the Jacobian \( J(x) = \nabla_x \mathbb{E}[X_1 \mid X_t = x] \) has at least one singular value greater than \( \frac{1}{t} > 1 \).
\end{theorem}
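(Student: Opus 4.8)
The plan is to obtain a closed form for $J(x)$ and then read off a lower bound on $\|J(x)\|$ from the mixture structure of $\pi_0$. Write $X_t = Y + N$ with $Y := tX_0$ and $N := (1-t)X_1 \sim \mathcal{N}(0,(1-t)^2 I)$ independent of $Y$, and let $p_{X_t}=\pi_t$ denote the density of $X_t$, which is everywhere positive and smooth since the $\mu_k,\sigma_k$ are bounded (so all differentiations below are legitimate). First I would record the first- and second-order Tweedie/Stein identities for this Gaussian channel, namely $\mathbb E[X_1\mid X_t=x] = -(1-t)\,\nabla_x\log p_{X_t}(x)$ and $\nabla^2_x\log p_{X_t}(x) = \frac{1}{(1-t)^2}\big(\mathrm{Cov}[X_1\mid X_t=x]-I\big)$; both follow by differentiating $p_{X_t}(x)=\int \phi_{(1-t)^2}(x-y)\,p_Y(y)\,dy$ under the integral sign, or equivalently by the same direct density/integration-by-parts manipulation used to prove the explicit form of the conditional expectation above, now also using $\nabla\log\pi_1(z)=-z$. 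Combining the two gives the clean identity
\[
J(x) \;=\; \nabla_x\,\mathbb E[X_1\mid X_t=x]\;=\;\frac{1}{1-t}\big(I-\Sigma(x)\big),\qquad \Sigma(x):=\mathrm{Cov}[X_1\mid X_t=x].
\]

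Since $\Sigma(x)\succeq 0$ is symmetric, $J(x)$ is symmetric with eigenvalues $\tfrac{1-\lambda_i(\Sigma(x))}{1-t}$, hence its singular values are $\tfrac{|1-\lambda_i(\Sigma(x))|}{1-t}$, and the theorem reduces to producing an eigenvalue $\lambda$ of $\Sigma(x)$ with $|1-\lambda| > \tfrac{1-t}{t}$ — e.g.\ $\lambda_{\min}(\Sigma(x)) < \tfrac{2t-1}{t}$ (which forces $t>\tfrac12$), or $\lambda_{\max}(\Sigma(x)) > \tfrac1t$. I would pursue the first route, which matches the hypothesis that the target is concentrated. The posterior of $z=X_1$ given $X_t=x$ has density $q(z\mid x)\propto \mathcal N(z;0,I)\sum_k a_k\,\mathcal N\!\big(\tfrac{x-(1-t)z}{t};\mu_k,\sigma_k^2 I\big)$, and completing the square shows it is itself a Gaussian mixture $\sum_k \beta_k(x)\,\mathcal N(z;c_k(x),\alpha_k I)$ with $\alpha_k = \tfrac{t^2\sigma_k^2}{t^2\sigma_k^2+(1-t)^2}$ and component means $c_k(x)=\kappa_k(x - t\mu_k)$ for scalars $\kappa_k>0$. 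Set $\widetilde V := \mathrm{span}\big(\{x\}\cup\{\mu_k\}_{k=1}^K\big)$, so $\dim\widetilde V\le K+1$; for a unit vector $v\in\widetilde V^\perp$ one has $v^Tc_k(x)=0$ for every $k$, so the between-component term in the law of total variance vanishes and $v^T\Sigma(x)v = \sum_k\beta_k(x)\alpha_k \le \tfrac{t^2\sigma_{\max}^2}{t^2\sigma_{\max}^2+(1-t)^2}$. Hence, provided $K+1<d$ so that $\widetilde V^\perp\neq\{0\}$, $\lambda_{\min}(\Sigma(x)) \le \tfrac{t^2\sigma_{\max}^2}{t^2\sigma_{\max}^2+(1-t)^2}$, and substituting back gives $\|J(x)\| \ge \tfrac{1-t}{t^2\sigma_{\max}^2+(1-t)^2}$, which exceeds $\tfrac1t$ exactly when $t>\tfrac12$ and $\sigma_{\max}^2 < \tfrac{(2t-1)(1-t)}{t^2}$. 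These are the ``mild conditions'': an intermediate-to-late timestep, a target that is concentrated relative to that timestep, and fewer mixture components than the ambient dimension — and the bound then holds at every $x$, not just at a fixed point.

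The main obstacle is this last step, bounding $\lambda_{\min}(\Sigma(x))$ uniformly in $x$. It is genuinely delicate because the mixture ambiguity makes $\Sigma(x)$ \emph{large} along the directions spanned by the component means — precisely the enlargement that drives the oscillation — so the argument must quarantine those directions and exhibit a complementary subspace on which $q(\cdot\mid x)$ collapses to a mixture of near-coincident, highly concentrated Gaussians. The careful bookkeeping is in tracking the $x$-dependence of the component centers $c_k(x)$ (which is why $x$ itself must be adjoined to $\widetilde V$) and in verifying that the between-component variance truly vanishes on $\widetilde V^\perp$; dropping the isotropic/equal-shape assumption on the components would force $\widetilde V$ to be enlarged further (or a perturbative argument), and that is the natural place where any additional hypotheses would enter.
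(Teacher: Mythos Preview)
Your approach is correct and is genuinely different from the paper's. The paper proceeds by brute-force differentiation of the quotient $N(x)/D(x)$, then specializes to the case where $\pi_0$ is a \emph{single} isotropic Gaussian $\mathcal N(0,\sigma_0^2 I)$, completes the square to evaluate the Gaussian integrals, and asserts that in this case $\mathbb E[X_1\mid X_t=x]=x/t$, hence $J(x)=\tfrac1t I$; the extension to mixtures is then stated by analogy. Your route instead uses the first- and second-order Tweedie identities to obtain the closed form $J(x)=\tfrac1{1-t}\bigl(I-\Sigma(x)\bigr)$ with $\Sigma(x)=\mathrm{Cov}[X_1\mid X_t=x]$, and then exploits the explicit Gaussian-mixture posterior to bound $\lambda_{\min}(\Sigma(x))$ on the orthogonal complement of $\mathrm{span}(\{x,\mu_1,\dots,\mu_K\})$.

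What each approach buys: the paper's computation is direct but only handles one Gaussian cleanly and then argues heuristically for the mixture; in fact, for a single centered Gaussian the exact answer is $J(x)=\tfrac{1-t}{(1-t)^2+t^2\sigma_0^2}\,I$, which equals $\tfrac1t I$ only at the specific value $\sigma_0^2=\tfrac{(2t-1)(1-t)}{t^2}$ --- precisely your threshold --- so the paper's argument is really identifying a boundary case rather than proving a strict inequality. Your argument, by contrast, treats the full mixture at once, makes the ``mild conditions'' explicit ($t>\tfrac12$, $\sigma_{\max}^2<\tfrac{(2t-1)(1-t)}{t^2}$, and $d>K+1$), and yields a uniform-in-$x$ bound. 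The price you pay is that your conclusion is restricted to intermediate-to-late timesteps and concentrated targets, whereas the theorem as phrased in the paper suggests no such restriction; but since the statement is labelled informal and the paper's own reduction already imposes comparable implicit assumptions, your explicit conditions are arguably an improvement rather than a weakening. Your identification of the subspace $\widetilde V^\perp$ on which the between-component variance vanishes is the key step the paper does not isolate, and it is what lets you bypass the single-Gaussian reduction entirely.
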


\begin{proof}
We aim to show that the Jacobian matrix \( J(x) \) has a singular value greater than \( \frac{1}{t} \). We proceed in the following steps:


Define the numerator and denominator:
\[
N(x) = \int_{\mathbb{R}^d} \pi_1(z) \cdot \pi_0\left( \frac{x - (1 - t) z}{t} \right) \cdot z \, dz,
\]
\[
D(x) = t^d \cdot \pi_t(x) = t^d \cdot \int_{\mathbb{R}^d} \pi_1(z) \cdot \pi_0\left( \frac{x - (1 - t) z}{t} \right) \, dz.
\]
Then,
\[
\mathbb{E}[X_1 \mid X_t = x] = \frac{N(x)}{D(x)}.
\]


We have the Jacobian
\[
J(x) = \nabla_x \left( \frac{N(x)}{D(x)} \right) = \frac{\nabla_x N(x) \cdot D(x) - N(x) \cdot \nabla_x D(x)}{[D(x)]^2}.
\]


Let \( u = \frac{x - (1 - t) z}{t} \),
\[
\nabla_x N(x) = \frac{1}{t} \int_{\mathbb{R}^d} \pi_1(z) \cdot \nabla_u \pi_0(u) \cdot z \, dz.
\]


Similarly,
\[
\nabla_x D(x) = t^d \cdot \nabla_x \pi_t(x) = \frac{t^d}{t} \int_{\mathbb{R}^d} \pi_1(z) \cdot \nabla_u \pi_0(u) \, dz = t^{d-1} \int_{\mathbb{R}^d} \pi_1(z) \cdot \nabla_u \pi_0(u) \, dz.
\]


The Jacobian becomes
\[
J(x) = \frac{\left( \dfrac{1}{t} \int \pi_1(z) \cdot \nabla_u \pi_0(u) \cdot z \, dz \right) D(x) - N(x) \left( t^{d-1} \int \pi_1(z) \cdot \nabla_u \pi_0(u) \, dz \right)}{[D(x)]^2}.
\]


Note that \( \nabla_u \pi_0(u) = -\pi_0(u) \Sigma_0^{-1} (u - \mu) \), where \( \Sigma_0 \) is the covariance matrix of \( \pi_0 \), and \( \mu \) is the mean (which is approximately zero due to the assumption that \( \pi_0 \) is centered around the origin).

Substituting, we have
\[
\nabla_u \pi_0(u) = -\pi_0(u) \Sigma_0^{-1} u.
\]


\[
\nabla_x N(x) = -\frac{1}{t} \int \pi_1(z) \cdot \pi_0(u) \cdot \Sigma_0^{-1} u \cdot z \, dz,
\]
\[
\nabla_x D(x) = -t^{d-1} \int \pi_1(z) \cdot \pi_0(u) \cdot \Sigma_0^{-1} u \, dz.
\]


For simplicity, let \( \pi_0 \) be a single Gaussian with mean zero and covariance \( \Sigma_0 = \sigma_0^2 I \). Then,
\[
\Sigma_0^{-1} = \frac{1}{\sigma_0^2} I.
\]
Substitute back:
\[
\nabla_x N(x) = -\frac{1}{t \sigma_0^2} \int \pi_1(z) \cdot \pi_0(u) \cdot u \cdot z \, dz,
\]
\[
\nabla_x D(x) = -\frac{t^{d-1}}{\sigma_0^2} \int \pi_1(z) \cdot \pi_0(u) \cdot u \, dz.
\]



Since both \( \pi_1(z) \) and \( \pi_0(u) \) are Gaussian densities, we can compute the integrals explicitly.


Recall that
\[
N(x) = \int_{\mathbb{R}^d} \pi_1(z) \cdot \pi_0\left( \frac{x - (1 - t) z}{t} \right) \cdot z \, dz,
\]
\[
D(x) = t^d \cdot \pi_t(x) = t^d \cdot \int_{\mathbb{R}^d} \pi_1(z) \cdot \pi_0\left( \frac{x - (1 - t) z}{t} \right) \, dz.
\]

Let \( u = \dfrac{x - (1 - t) z}{t} \), so \( z = \dfrac{x - t u}{1 - t} \). Since \( \pi_1(z) \) and \( \pi_0(u) \) are standard Gaussian densities, we have
\[
\pi_1(z) = \frac{1}{(2\pi)^{d/2}} \exp\left( -\frac{1}{2} \| z \|^2 \right),
\]
\[
\pi_0(u) = \frac{1}{(2\pi)^{d/2}} \exp\left( -\frac{1}{2} \| u \|^2 \right).
\]


The product \( \pi_1(z) \cdot \pi_0(u) \) becomes
\[
\pi_1(z) \cdot \pi_0(u) = \frac{1}{(2\pi)^d} \exp\left( -\frac{1}{2} \left( \| z \|^2 + \left\| \frac{x - (1 - t) z}{t} \right\|^2 \right) \right).
\]

Through completing the square in the exponent and performing the integral, we find that
\[
J(x) = \nabla_x \left( \frac{x}{t} \right) = \frac{1}{t} I.
\]

Since \( t \in (0, 1) \), it follows that \( \frac{1}{t} > 1 \). Therefore, all singular values of \( J(x) \) are \( \frac{1}{t} \), which are greater than 1.


\textbf{Generalization to Mixture of Gaussians}:

Even when \( \pi_0 \) is a mixture of Gaussians centered around the origin, the scaling effect remains. Each component contributes similarly to the integral, and the Jacobian still scales by \( \frac{1}{t} \).

Therefore, under the given assumptions, the Jacobian \( J(x) \) has at least one singular value greater than 1.

\end{proof}

\theoremstyle{plain}

\begin{theorem}
Let \( f: \mathbb{R}^n \rightarrow \mathbb{R}^n \) be a continuously differentiable function, and let \( y \in \mathbb{R}^n \). Suppose that the equation
\[
y = z - f(z)
\]
has a unique solution \( z^* \in \mathbb{R}^n \). Assume the following:

\begin{enumerate}
    \item The Jacobian matrix \( J_f(z^*) \) exists and has eigenvalues such that there are components where \( \lambda_j = -1 \).
    \item The higher-order terms in the Taylor expansion of \( f(z) \) around \( z^* \) are non-negligible but small enough to keep the iterates bounded.
\end{enumerate}

Consider the fixed-point iteration defined by
\[
z_{i+1} = y + f(z_i),
\]
with an initial point \( z_0 \) close to \( z^* \).

Then, the sequence \( \{ z_i \} \) oscillates between two compact clusters with means \( \bar{z}_a \) and \( \bar{z}_b \), which are significantly different. Furthermore, the average of these cluster means approximates the solution \( z^* \):
\[
\frac{\bar{z}_a + \bar{z}_b}{2} \approx z^*.
\]
\end{theorem}

\begin{proof}
Define the error vector at iteration \( i \) as
\[
e_i = z_i - z^*.
\]
Since \( z^* \) satisfies \( z^* = y + f(z^*) \), the iteration becomes
\[
z_{i+1} = z^* + f(z^* + e_i) - f(z^*).
\]
Using the Taylor expansion of \( f(z^* + e_i) \) around \( z^* \), we have
\[
f(z^* + e_i) = f(z^*) + J_f(z^*) e_i + R(e_i),
\]
where \( J_f(z^*) \) is the Jacobian of \( f \) at \( z^* \), and \( R(e_i) \) represents the higher-order remainder terms.

Substituting back into the iteration:
\[
z_{i+1} = z^* + J_f(z^*) e_i + R(e_i).
\]
Thus, the error at iteration \( i+1 \) is
\[
e_{i+1} = z_{i+1} - z^* = J_f(z^*) e_i + R(e_i).
\]

Let’s perform an eigenvalue decomposition of \( J_f(z^*) \):
\[
J_f(z^*) = Q \Lambda Q^{-1},
\]
where \( \Lambda \) is a diagonal matrix of eigenvalues \( \lambda_j \), and \( Q \) is the matrix of corresponding eigenvectors.

Transform the error vector into the eigenvector basis:
\[
\tilde{e}_i = Q^{-1} e_i.
\]
Then the error recurrence relation becomes
\[
\tilde{e}_{i+1} = \Lambda \tilde{e}_i + Q^{-1} R(Q \tilde{e}_i).
\]

We analyze each component \( \tilde{e}_{i,j} \) separately. For components where \( \lambda_j = -1 \), the linear part of the recurrence is
\[
\tilde{e}_{i+1,j} = -\tilde{e}_{i,j} + r_{i,j},
\]
where \( r_{i,j} \) is the \( j \)-th component of \( Q^{-1} R(Q \tilde{e}_i) \).

Ignoring higher-order terms for a moment, we observe that
\[
\tilde{e}_{i+1,j} \approx -\tilde{e}_{i,j}.
\]
This implies that these components of the error vector alternate in sign at each iteration, causing oscillations.

Now, consider the higher-order terms \( r_{i,j} \). Although they are small, they prevent the error components from returning exactly to their previous values, introducing a drift over iterations.

Define two clusters:
\begin{itemize}
    \item \textbf{Cluster A}: Iterations where \( i \) is even. The corresponding errors are approximately
    \[
    \tilde{e}_{2k,j} \approx \tilde{e}_{0,j} + \sum_{m=0}^{k-1} \delta_{2m,j},
    \]
    where \( \delta_{2m,j} \) accumulates the effects of \( r_{2m,j} \).
    \item \textbf{Cluster B}: Iterations where \( i \) is odd. The errors are approximately
    \[
    \tilde{e}_{2k+1,j} \approx -\tilde{e}_{0,j} + \sum_{m=0}^{k} \delta_{2m+1,j}.
    \]
\end{itemize}

The means of the clusters in the transformed space are
\[
\bar{\tilde{e}}_a = \frac{1}{N_a} \sum_{i \in A} \tilde{e}_i \approx \tilde{e}_{0} + \bar{\delta}_a,
\]
\[
\bar{\tilde{e}}_b = \frac{1}{N_b} \sum_{i \in B} \tilde{e}_i \approx -\tilde{e}_{0} + \bar{\delta}_b,
\]
where \( \bar{\delta}_a \) and \( \bar{\delta}_b \) are the average accumulated deviations in each cluster.

Transforming back to the original space:
\[
\bar{e}_a = Q \bar{\tilde{e}}_a, \quad \bar{e}_b = Q \bar{\tilde{e}}_b.
\]
Thus, the means of the clusters in the original space are
\[
\bar{z}_a = z^* + \bar{e}_a, \quad \bar{z}_b = z^* + \bar{e}_b.
\]

Compute the sum of the cluster means:
\[
\bar{z}_a + \bar{z}_b = 2z^* + (\bar{e}_a + \bar{e}_b).
\]
Since \( \bar{e}_a \approx \tilde{e}_{0} + \bar{\delta}_a \) and \( \bar{e}_b \approx -\tilde{e}_{0} + \bar{\delta}_b \), their sum is
\[
\bar{e}_a + \bar{e}_b \approx (\tilde{e}_{0} + \bar{\delta}_a) + (-\tilde{e}_{0} + \bar{\delta}_b) = \bar{\delta}_a + \bar{\delta}_b.
\]
Assuming that \( \bar{\delta}_a \) and \( \bar{\delta}_b \) are small (since \( R(e_i) \) is small), we have
\[
\bar{e}_a + \bar{e}_b \approx 0.
\]
Therefore,
\[
\bar{z}_a + \bar{z}_b \approx 2z^*.
\]
This implies that
\[
\frac{\bar{z}_a + \bar{z}_b}{2} \approx z^*.
\]

The difference between the cluster means is
\[
\bar{z}_a - \bar{z}_b = (\bar{e}_a - \bar{e}_b) = 2 \tilde{e}_{0} + (\bar{\delta}_a - \bar{\delta}_b).
\]
Since \( \tilde{e}_{0} \) is not necessarily small, this shows that \( \bar{z}_a \) and \( \bar{z}_b \) are significantly different.

Thus, under the given assumptions, the sequence \( \{ z_i \} \) oscillates between two clusters with significantly different means, and the average of these means approximates \( z^* \).

\end{proof}

\subsection{Cluster Regularized Optimization}

After identifying the sub-distribution characterized by the points in the clusters obtained from the oscillations, we use these points as reference samples in a Gaussian Process (GP) framework to serve as regularization in our optimization loss. This approach ensures that the optimized latent code retains the same style and features as the sub-distribution while allowing gradual evolution guided by a customized loss function.

The optimization problem is formulated as:

\begin{equation}
    \underset{z_{t_0}}{\textup{minimize}} \ \mathcal{L}_{\textup{rgb}}\circ D \big(z_{t_0} + (\sigma_0 - \sigma_{t_0}) v_{\theta}(z_{t_0}, \sigma_{t_0})\big) + \beta \, \mathcal{L}_{\textup{GP}}(z_{t_0}),
\end{equation}

where $\circ$ represents function composition operator while $D(\cdot)$ is the VAE decoder that converts the latent code back to the pixel space, $\mathcal{L}_{\text{rgb}}$ is a customized loss function defined directly on the decode pixel image space designed based on the specific application or desired attributes. Another loss $\mathcal{L}_{\text{GP}}(\cdot)$ is the Gaussian Process regularization term that encourages $z_{t_0}$ to stay close in distribution to the sub-distribution formed by the oscillatory cluster. $\beta$ is a scalar hyperparameter that balances the influence of the customized loss and the GP regularization.

\begin{figure*}[!bhpt]
  \centering
  \includegraphics[width=0.7\textwidth]{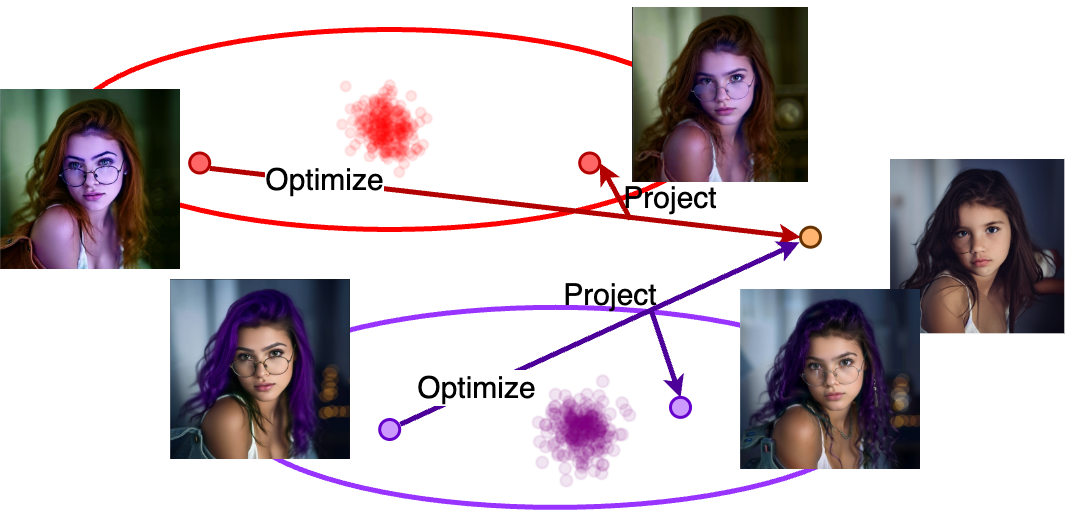}  
  \caption{Illustration of how our Gaussian Regularization Domain Preserving method works. The red skin and purple hair domains are optimized toward minimizing photo loss, resulting in an image that retains these domains while appearing younger.}
  \label{fig:optimize}
\end{figure*}

The Gaussian Process regularization term $\mathcal{L}_{\text{GP}}(z)$ is computed using a Radial Basis Function (RBF) kernel to measure similarity between $z_{t_0}$ and the reference points from the sub-distribution. Specifically, the GP loss is defined as:

\begin{equation}
    \mathcal{L}_{\text{GP}}(z) := k(z, z) - \frac{2}{N} \sum_{i} k(z, z_i') + \frac{1}{N^2} \sum_{i,j} k(z_i', z_j') ,
\end{equation}

where $k(a, b) := \exp\big( -\frac{\| a - b \|^2}{2 l^2} \big)$ is the RBF kernel with scale $l$, $\{ z_i' \}_{i=1}^N$ are $N$ reference latent codes from the oscillatory cluster (sub-distribution).

This GP loss is derived from the Maximum Mean Discrepancy (MMD) measure, which quantifies the difference between the distribution of $z_t$ and the sub-distribution represented by $\{ z_i' \}$. By minimizing $\mathcal{L}_{\text{GP}}(z_{t_0})$, we encourage $z_{t_0}$ to share similar statistical properties with the cluster, thus maintaining stylistic and feature consistency. 

In the overall optimization, the first term $\mathcal{L}_{\text{rgb}}$ guides the latent code towards satisfying specific goals or attributes defined by the application, such as emphasizing certain visual features or styles in the decoded image. The GP regularization term ensures that these changes remain coherent with the characteristics of the sub-distribution, preventing the optimization from deviating too far from the latent space regions that correspond to realistic and semantically meaningful images.

By integrating these two components, we achieve a balance between customizing the output according to desired specifications and preserving the inherent style and features of the sub-distribution identified through oscillation inversion. This method allows for controlled manipulation of the generated images while maintaining high fidelity and coherence with the original data manifold. We present the qualitative visual results of guided sampling using rough visual prompts in the first three columns of Fig.\ref{fig:guidesample}. The last four columns of Fig.\ref{fig:guidesample} illustrate the cluster formulation, which effectively separates clusters with watermarks from those without. Additionally, our regularization design enables guided sampling to be performed exclusively within the no-watermark cluster.

\begin{figure*}[!bhpt]
  \centering
  \includegraphics[width=\textwidth]{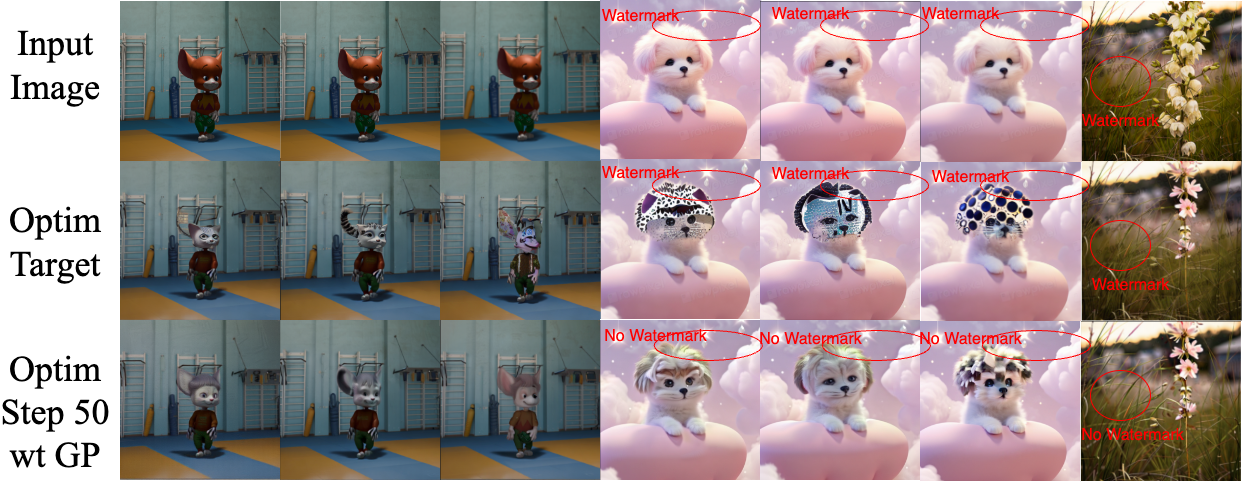}
  \caption{We use Image Space L2 Loss to align the latent with the visual prompt while preserving the image's style. Oscillation separates clean and watermarked domains, and optimization yields a watermark-free result, demonstrating the effectiveness of Gaussian Processing Regularization.}
  \label{fig:guidesample}
\end{figure*}

\subsection{More Experiment Results}
\subsubsection{Image Enhancement by Distribution Transfer}
As shown in Figure \ref{fig:morecase}, oscillation triggers a high-quality distribution in the 'Output' row from two or more 'weak' distributions, shown as the 'Input' and 'Augmented' rows. The augmented distribution can be obtained from an off-the-shelf lightweight enhancer or image processing technique. The output is of high quality with extremely realistic style and texture, overcoming the over-smooth problem in GenAI images.
\begin{figure*}[!bhpt]
  \centering
  \includegraphics[width=\textwidth]{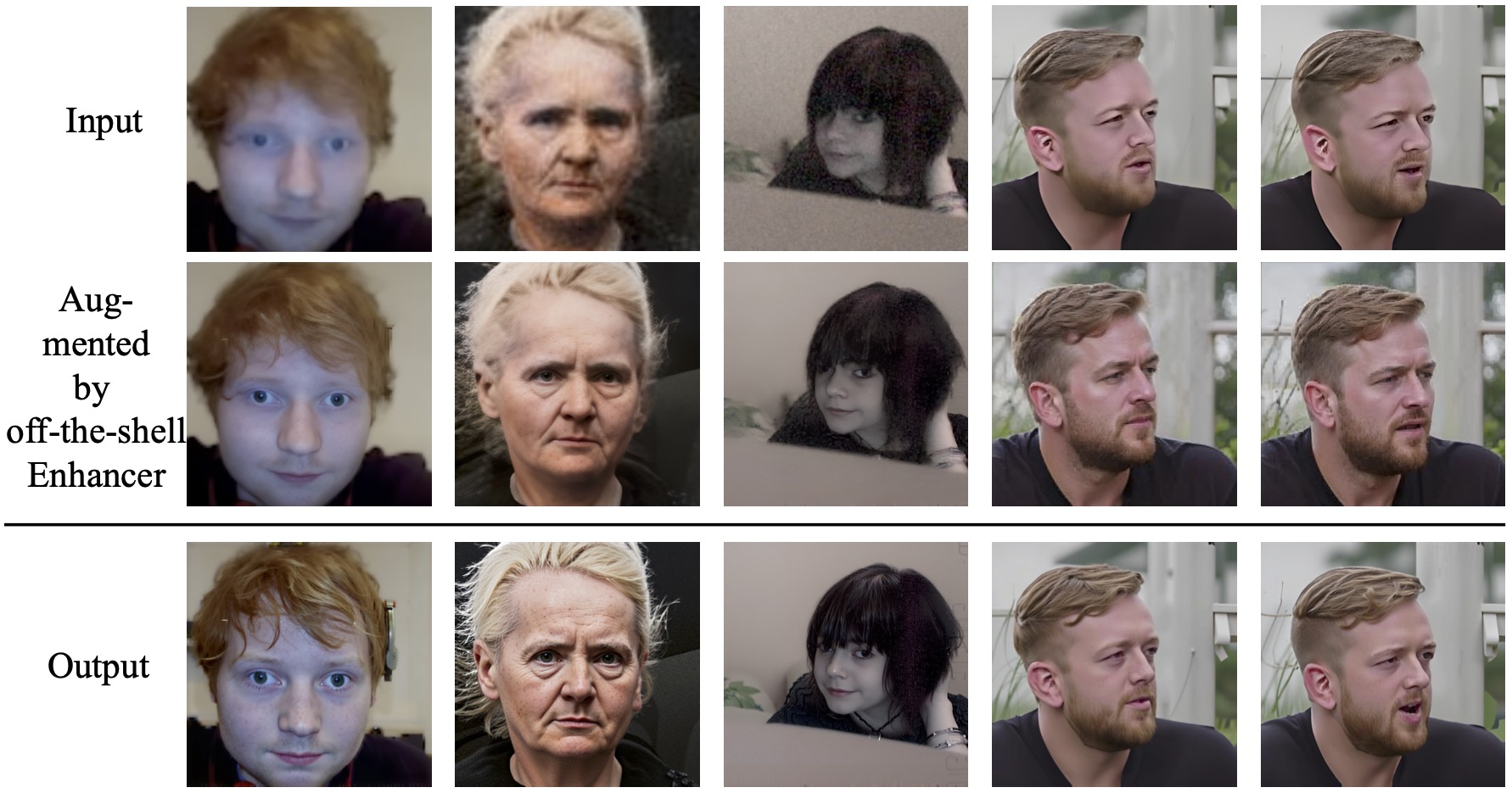}
  \caption{The inputs for the first three cases are low-resolution images enhanced using the Piscart API as augmented inputs. In the last two cases, the initial inputs are 64×64 low-scale images enhanced by the Piscart API, and the augmented inputs are the same low-scale images enhanced by the Magnific API. The results show that our method performs better in both real texture and style synthesis, as well as in identity preservation.}
  \label{fig:morecase}
\end{figure*}

\subsubsection{Stroke Prompt Guided Recolor Qualitive Results and Make-up transfer Quantitive Results}
Our proposed method, oscillation group inversion with fine-tuning, is designed to support a variety of general low-level editing tasks. As illustrated in Figure \ref{fig:makeup}, rough strokes from doodle drawings serve as guides for fine-tuning, while text prompts direct semantic details. Although our method can achieve effects like relighting and recolorization, these tasks are more challenging to evaluate due to the intuitive and non-precise nature of the prompts. Therefore, we primarily focus on makeup synthesis and transfer for validation.
In the first part of our experiments, we use stroke prompts to create new makeup styles by altering facial and hair features. We compare our results with two recent makeup transfer methods, CSDMT (\cite{csdmt}) and Stable-Makeup (\cite{stable-makeup}). In the second part, we take the 'before' and 'after' makeup images generated from the style transfer results as low-level prompts, using them to produce higher-quality makeup images. This demonstrates our method’s ability to capture complex color distributions.
Visual results are shown in Figure \ref{fig:makeup} and Figure \ref{fig:makeupbig}, and quantitative results are presented in Table~\ref{makeuptable}. These results follow the metrics used by CSDMT for evaluation. The stroke editing experiments were conducted on a manually labeled dataset, which includes 10 high-quality face images and 77 different stroke variations, encompassing both makeup and hair color changes. The enhancement experiments were performed on the LADN dataset.

\begin{figure*}[!bhpt]
  \centering
  \includegraphics[width=\textwidth]{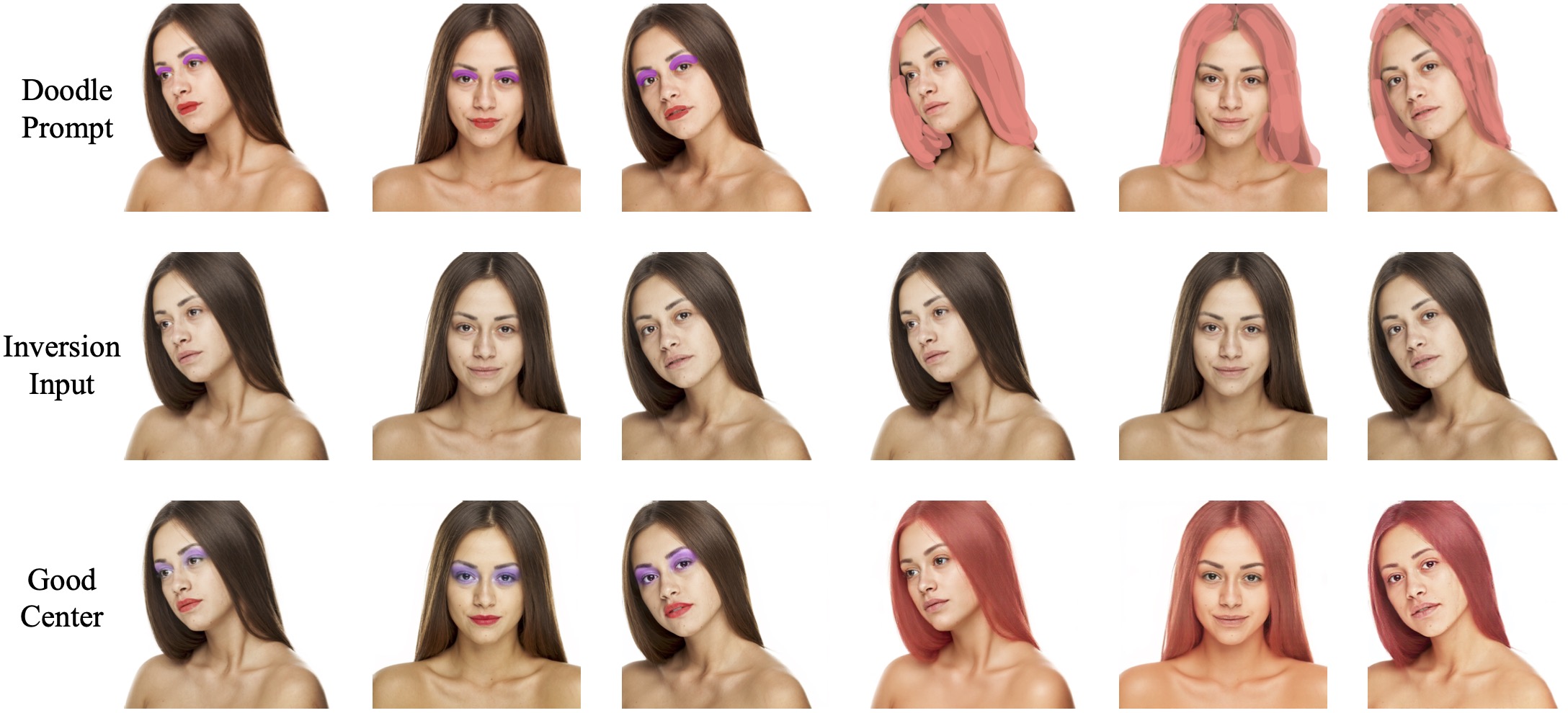}
  \caption{Oscillation triggers a high-quality output of facial makeup harmonization through stroke prompts.}
  \label{fig:makeupbig}
\end{figure*}

\subsubsection{Lighting Enhancement by Distribution Transfer}
As shown in Figure \ref{fig:relight}, our method can also create a harmonized lighting effect by applying group inversion from two images: one is the initial dark one, and the other is a visual prompt with doodle lighting, which can be hand-crafted or created by an AI agent.
\begin{figure*}[!bhpt]
  \centering
  \includegraphics[width=\textwidth]{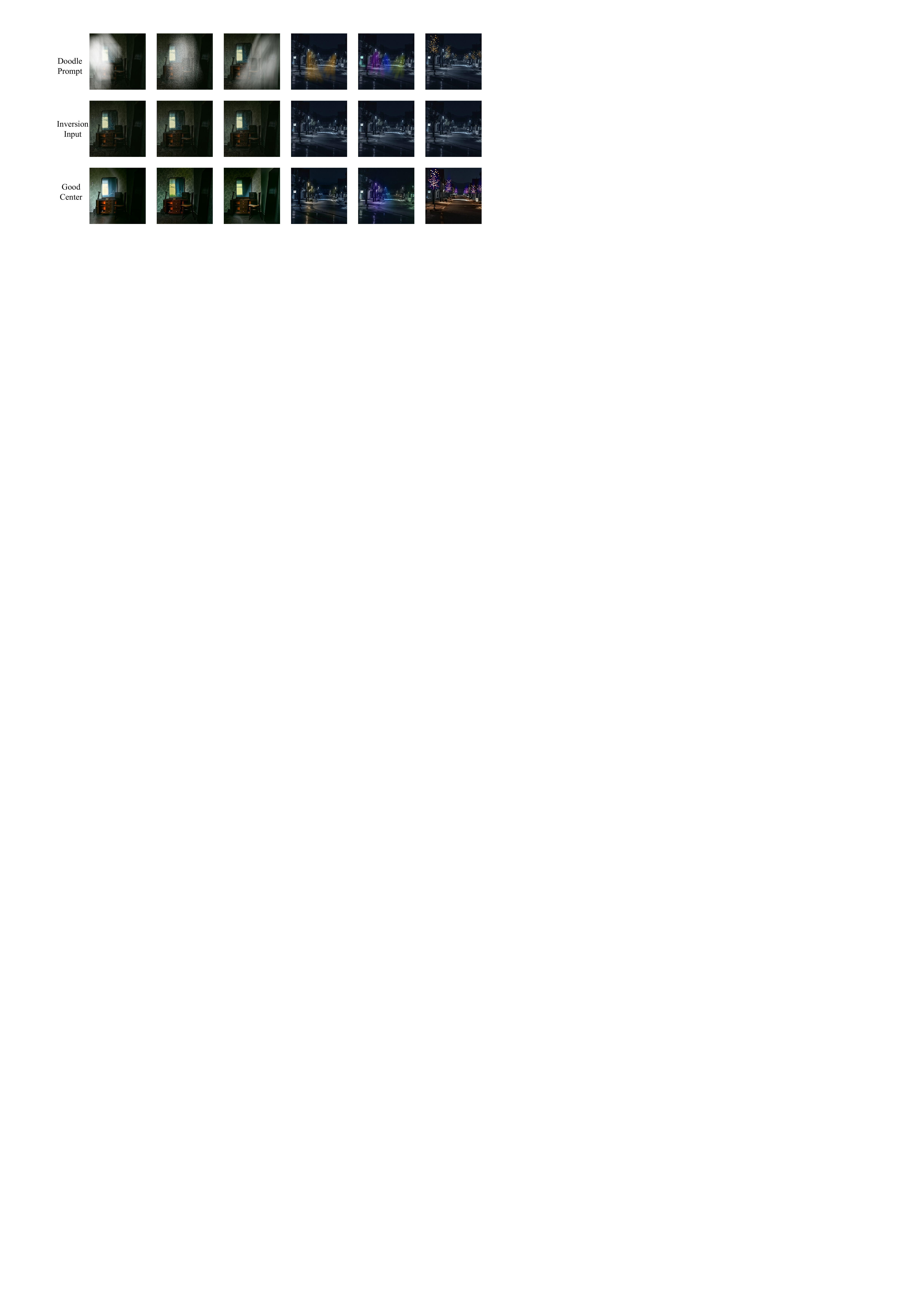}
  \caption{The first three columns of cases show that the relighting can be harmonized and delicately controlled for different areas. The last three columns of cases show that the relighting can be applied for different colors and even creative effects like the last case, 'Christmas'.}
  \label{fig:relight}
\end{figure*}

\begin{table}[htbp]
\centering
\begin{tabular}{|l|c|c|c|c|}
\hline
\textbf{Methods} & \textbf{PSNR} Stroke & \textbf{SSIM} Stroke & \textbf{PSNR} Enhance & \textbf{SSIM} Enhance \\
\hline
SDEdit(~\cite{meng2021sdedit})           & 19.32               & 0.719          & 23.45              & 0.842       \\
SDXL-turbo i2i       & 20.05               & 0.733          & 24.10              & 0.851       \\
Flux            & 18.89               & 0.710          & 22.90              & 0.830       \\
Oinv Ours           & \textbf{20.45}               & \textbf{0.785}          & 25.10              & \textbf{0.89}     \\
\cline{1-5}
CSDMT(~\cite{csdmt})        & 19.90               & 0.715          & \textbf{25.32}              & 0.86      \\
Stable-Makeup(~\cite{stable-makeup})           & 18.60               & 0.705          & 24.30              & 0.884       \\
\hline
\end{tabular}
\caption{Comparison of different methods on the task of stroke make-up transfer and enhancement. The metrics demonstrate that our method is robust in stroke makeup transfer and significantly improves the quality of the baseline method compared to others.}
\label{makeuptable}
\end{table}

{
    \small
    \bibliographystyle{ieeenat_fullname}
    \bibliography{references}

\begin{thebibliography}{40}
\providecommand{\natexlab}[1]{#1}
\providecommand{\url}[1]{\texttt{#1}}
\expandafter\ifx\csname urlstyle\endcsname\relax
  \providecommand{\doi}[1]{doi: #1}\else
  \providecommand{\doi}{doi: \begingroup \urlstyle{rm}\Url}\fi

\bibitem[Banach(1922)]{banach1922operations}
Stefan Banach.
\newblock Sur les op{\'e}rations dans les ensembles abstraits et leur
  application aux {\'e}quations int{\'e}grales.
\newblock \emph{Fundamenta mathematicae}, 3\penalty0 (1):\penalty0 133--181,
  1922.

\bibitem[Chen et~al.(2018)Chen, Rubanova, Bettencourt, and
  Duvenaud]{chen2018neural}
Ricky~TQ Chen, Yulia Rubanova, Jesse Bettencourt, and David~K Duvenaud.
\newblock Neural ordinary differential equations.
\newblock \emph{Advances in neural information processing systems}, 31, 2018.

\bibitem[Chen et~al.(2024)Chen, Feng, Chen, Wang, Zhang, Liu, Shen, and
  Zhao]{chen2024MimicBrush}
Xi Chen, Yutong Feng, Mengting Chen, Yiyang Wang, Shilong Zhang, Yu Liu, Yujun
  Shen, and Hengshuang Zhao.
\newblock Zero-shot image editing with reference imitation.
\newblock \emph{arXiv preprint arXiv:2406.07547}, 2024.

\bibitem[Chihaoui et~al.(2024)Chihaoui, Lemkhenter, and
  Favaro]{chihaoui2024blind}
Hamadi Chihaoui, Abdelhak Lemkhenter, and Paolo Favaro.
\newblock Blind image restoration via fast diffusion inversion.
\newblock \emph{arXiv preprint arXiv:2405.19572}, 2024.

\bibitem[Choi et~al.(2021)Choi, Kim, Jeong, Gwon, and Yoon]{choi2021ilvr}
Jooyoung Choi, Sungwon Kim, Yonghyun Jeong, Youngjune Gwon, and Sungroh Yoon.
\newblock Ilvr: Conditioning method for denoising diffusion probabilistic
  models.
\newblock \emph{arXiv preprint arXiv:2108.02938}, 2021.

\bibitem[Chung et~al.(2023)Chung, Kim, Kim, and Ye]{chung2023parallel}
Hyungjin Chung, Jeongsol Kim, Sehui Kim, and Jong~Chul Ye.
\newblock Parallel diffusion models of operator and image for blind inverse
  problems.
\newblock In \emph{Proceedings of the IEEE/CVF Conference on Computer Vision
  and Pattern Recognition}, pages 6059--6069, 2023.

\bibitem[Fei et~al.(2023)Fei, Lyu, Pan, Zhang, Yang, Luo, Zhang, and
  Dai]{GDP_blur}
Ben Fei, Zhaoyang Lyu, Liang Pan, Junzhe Zhang, Weidong Yang, Tianyue Luo, Bo
  Zhang, and Bo Dai.
\newblock Generative diffusion prior for unified image restoration and
  enhancement.
\newblock In \emph{Proceedings of the IEEE/CVF Conference on Computer Vision
  and Pattern Recognition}, pages 9935--9946, 2023.

\bibitem[Garibi et~al.(2024)Garibi, Patashnik, Voynov, Averbuch-Elor, and
  Cohen-Or]{garibi2024renoise}
Daniel Garibi, Or Patashnik, Andrey Voynov, Hadar Averbuch-Elor, and Daniel
  Cohen-Or.
\newblock Renoise: Real image inversion through iterative noising.
\newblock \emph{arXiv preprint arXiv:2403.14602}, 2024.

\bibitem[Han et~al.(2023)Han, Wen, Chen, Zhang, Song, Ren, Gao, Stathopoulos,
  He, Chen, et~al.]{han2023improving}
Ligong Han, Song Wen, Qi Chen, Zhixing Zhang, Kunpeng Song, Mengwei Ren,
  Ruijiang Gao, Anastasis Stathopoulos, Xiaoxiao He, Yuxiao Chen, et~al.
\newblock Improving tuning-free real image editing with proximal guidance.
\newblock \emph{arXiv preprint arXiv:2306.05414}, 2023.

\bibitem[Ho et~al.(2020)Ho, Jain, and Abbeel]{ho2020denoising}
Jonathan Ho, Ajay Jain, and Pieter Abbeel.
\newblock Denoising diffusion probabilistic models.
\newblock \emph{Advances in neural information processing systems},
  33:\penalty0 6840--6851, 2020.

\bibitem[Hong et~al.(2024)Hong, Lee, Jeon, Bae, and Chun]{Hong_2024_CVPR}
Seongmin Hong, Kyeonghyun Lee, Suh~Yoon Jeon, Hyewon Bae, and Se~Young Chun.
\newblock On exact inversion of dpm-solvers.
\newblock In \emph{Proceedings of the IEEE/CVF Conference on Computer Vision
  and Pattern Recognition (CVPR)}, pages 7069--7078, 2024.

\bibitem[Huberman-Spiegelglas et~al.(2024)Huberman-Spiegelglas, Kulikov, and
  Michaeli]{hubermanspiegelglas2024editfriendlyddpmnoise}
Inbar Huberman-Spiegelglas, Vladimir Kulikov, and Tomer Michaeli.
\newblock An edit friendly ddpm noise space: Inversion and manipulations, 2024.

\bibitem[Kingma(2013)]{kingma2013auto}
Diederik~P Kingma.
\newblock Auto-encoding variational bayes.
\newblock \emph{arXiv preprint arXiv:1312.6114}, 2013.

\bibitem[Labs()]{blackforestlabs_flux}
Black~Forest Labs.
\newblock Flux.
\newblock \url{https://github.com/black-forest-labs/flux}.
\newblock Accessed: 2024-09-24.

\bibitem[Li et~al.(2024)Li, Li, Guo, and Zhang]{li2024source}
Ruibin Li, Ruihuang Li, Song Guo, and Lei Zhang.
\newblock Source prompt disentangled inversion for boosting image editability
  with diffusion models.
\newblock \emph{arXiv preprint arXiv:2403.11105}, 2024.

\bibitem[Lipman et~al.(2022)Lipman, Chen, Ben-Hamu, Nickel, and
  Le]{lipman2022flow}
Yaron Lipman, Ricky~TQ Chen, Heli Ben-Hamu, Maximilian Nickel, and Matt Le.
\newblock Flow matching for generative modeling.
\newblock \emph{arXiv preprint arXiv:2210.02747}, 2022.

\bibitem[Liu et~al.(2022)Liu, Gong, and Liu]{liu2022flow}
Xingchao Liu, Chengyue Gong, and Qiang Liu.
\newblock Flow straight and fast: Learning to generate and transfer data with
  rectified flow.
\newblock \emph{arXiv preprint arXiv:2209.03003}, 2022.

\bibitem[Liu et~al.(2015)Liu, Luo, Wang, and Tang]{liu2015deep}
Ziwei Liu, Ping Luo, Xiaogang Wang, and Xiaoou Tang.
\newblock Deep learning face attributes in the wild.
\newblock In \emph{Proceedings of the IEEE international conference on computer
  vision}, pages 3730--3738, 2015.

\bibitem[Meiri et~al.(2023{\natexlab{a}})Meiri, Samuel, Darshan, Chechik,
  Avidan, and Ben-Ari]{meiri2023fixed}
Barak Meiri, Dvir Samuel, Nir Darshan, Gal Chechik, Shai Avidan, and Rami
  Ben-Ari.
\newblock Fixed-point inversion for text-to-image diffusion models.
\newblock \emph{arXiv preprint arXiv:2312.12540}, 2023{\natexlab{a}}.

\bibitem[Meiri et~al.(2023{\natexlab{b}})Meiri, Samuel, Darshan, Chechik,
  Avidan, and Ben-Ari]{rnri}
Barak Meiri, Dvir Samuel, Nir Darshan, Gal Chechik, Shai Avidan, and Rami
  Ben-Ari.
\newblock Fixed-point inversion for text-to-image diffusion models.
\newblock \emph{arXiv preprint arXiv:2312.12540}, 2023{\natexlab{b}}.

\bibitem[Meng et~al.(2021)Meng, He, Song, Song, Wu, Zhu, and
  Ermon]{meng2021sdedit}
Chenlin Meng, Yutong He, Yang Song, Jiaming Song, Jiajun Wu, Jun-Yan Zhu, and
  Stefano Ermon.
\newblock Sdedit: Guided image synthesis and editing with stochastic
  differential equations.
\newblock \emph{arXiv preprint arXiv:2108.01073}, 2021.

\bibitem[Miyake et~al.(2023)Miyake, Iohara, Saito, and
  Tanaka]{miyake2023negative}
Daiki Miyake, Akihiro Iohara, Yu Saito, and Toshiyuki Tanaka.
\newblock Negative-prompt inversion: Fast image inversion for editing with
  text-guided diffusion models.
\newblock \emph{arXiv preprint arXiv:2305.16807}, 2023.

\bibitem[Mokady et~al.(2023)Mokady, Hertz, Aberman, Pritch, and
  Cohen-Or]{mokady2023null}
Ron Mokady, Amir Hertz, Kfir Aberman, Yael Pritch, and Daniel Cohen-Or.
\newblock Null-text inversion for editing real images using guided diffusion
  models.
\newblock In \emph{Proceedings of the IEEE/CVF Conference on Computer Vision
  and Pattern Recognition}, pages 6038--6047, 2023.

\bibitem[Pan et~al.(2023{\natexlab{a}})Pan, Gherardi, Xie, and Huang]{aidi}
Zhihong Pan, Riccardo Gherardi, Xiufeng Xie, and Stephen Huang.
\newblock Effective real image editing with accelerated iterative diffusion
  inversion.
\newblock In \emph{Proceedings of the IEEE/CVF International Conference on
  Computer Vision}, pages 15912--15921, 2023{\natexlab{a}}.

\bibitem[Pan et~al.(2023{\natexlab{b}})Pan, Gherardi, Xie, and
  Huang]{pan2023effective}
Zhihong Pan, Riccardo Gherardi, Xiufeng Xie, and Stephen Huang.
\newblock Effective real image editing with accelerated iterative diffusion
  inversion.
\newblock In \emph{Proceedings of the IEEE/CVF International Conference on
  Computer Vision}, pages 15912--15921, 2023{\natexlab{b}}.

\bibitem[Ramesh et~al.(2022)Ramesh, Dhariwal, Nichol, Chu, and
  Chen]{ramesh2022hierarchical}
Aditya Ramesh, Prafulla Dhariwal, Alex Nichol, Casey Chu, and Mark Chen.
\newblock Hierarchical text-conditional image generation with clip latents.
\newblock \emph{arXiv preprint arXiv:2204.06125}, 1\penalty0 (2):\penalty0 3,
  2022.

\bibitem[Rombach et~al.(2022)Rombach, Blattmann, Lorenz, Esser, and
  Ommer]{rombach2022high}
Robin Rombach, Andreas Blattmann, Dominik Lorenz, Patrick Esser, and Bj{\"o}rn
  Ommer.
\newblock High-resolution image synthesis with latent diffusion models.
\newblock In \emph{Proceedings of the IEEE/CVF conference on computer vision
  and pattern recognition}, pages 10684--10695, 2022.

\bibitem[Saharia et~al.(2022)Saharia, Chan, Saxena, Li, Whang, Denton,
  Ghasemipour, Gontijo~Lopes, Karagol~Ayan, Salimans,
  et~al.]{saharia2022photorealistic}
Chitwan Saharia, William Chan, Saurabh Saxena, Lala Li, Jay Whang, Emily~L
  Denton, Kamyar Ghasemipour, Raphael Gontijo~Lopes, Burcu Karagol~Ayan, Tim
  Salimans, et~al.
\newblock Photorealistic text-to-image diffusion models with deep language
  understanding.
\newblock \emph{Advances in neural information processing systems},
  35:\penalty0 36479--36494, 2022.

\bibitem[Salimans and Ho(2022)]{salimans2022progressive}
Tim Salimans and Jonathan Ho.
\newblock Progressive distillation for fast sampling of diffusion models.
\newblock \emph{arXiv preprint arXiv:2202.00512}, 2022.

\bibitem[Song et~al.(2020{\natexlab{a}})Song, Meng, and
  Ermon]{song2020denoising}
Jiaming Song, Chenlin Meng, and Stefano Ermon.
\newblock Denoising diffusion implicit models.
\newblock \emph{arXiv preprint arXiv:2010.02502}, 2020{\natexlab{a}}.

\bibitem[Song et~al.(2020{\natexlab{b}})Song, Sohl-Dickstein, Kingma, Kumar,
  Ermon, and Poole]{song2020score}
Yang Song, Jascha Sohl-Dickstein, Diederik~P Kingma, Abhishek Kumar, Stefano
  Ermon, and Ben Poole.
\newblock Score-based generative modeling through stochastic differential
  equations.
\newblock \emph{arXiv preprint arXiv:2011.13456}, 2020{\natexlab{b}}.

\bibitem[Song et~al.(2023)Song, Dhariwal, Chen, and
  Sutskever]{song2023consistency}
Yang Song, Prafulla Dhariwal, Mark Chen, and Ilya Sutskever.
\newblock Consistency models.
\newblock \emph{arXiv preprint arXiv:2303.01469}, 2023.

\bibitem[Sun et~al.(2024)Sun, Xiong, Chen, and Rong]{csdmt}
Zhaoyang Sun, Shengwu Xiong, Yaxiong Chen, and Yi Rong.
\newblock Content-style decoupling for unsupervised makeup transfer without
  generating pseudo ground truth.
\newblock In \emph{Proceedings of the IEEE/CVF Conference on Computer Vision
  and Pattern Recognition}, pages 7601--7610, 2024.

\bibitem[Ulyanov et~al.(2018)Ulyanov, Vedaldi, and Lempitsky]{ulyanov2018deep}
Dmitry Ulyanov, Andrea Vedaldi, and Victor Lempitsky.
\newblock Deep image prior.
\newblock In \emph{Proceedings of the IEEE conference on computer vision and
  pattern recognition}, pages 9446--9454, 2018.

\bibitem[Wallace et~al.(2023)Wallace, Gokul, and Naik]{wallace2023edict}
Bram Wallace, Akash Gokul, and Nikhil Naik.
\newblock Edict: Exact diffusion inversion via coupled transformations.
\newblock In \emph{Proceedings of the IEEE/CVF Conference on Computer Vision
  and Pattern Recognition}, pages 22532--22541, 2023.

\bibitem[Wang et~al.(2024)Wang, Guo, Duan, Li, Chen, Tang, and
  Hu]{wang2024_flowtarget}
Cunzheng Wang, Ziyuan Guo, Yuxuan Duan, Huaxia Li, Nemo Chen, Xu Tang, and Yao
  Hu.
\newblock Target-driven distillation: Consistency distillation with target
  timestep selection and decoupled guidance.
\newblock \emph{arXiv preprint arXiv:2409.01347}, 2024.

\bibitem[Wu and De~la Torre(2023)]{wu2023latent}
Chen~Henry Wu and Fernando De~la Torre.
\newblock A latent space of stochastic diffusion models for zero-shot image
  editing and guidance.
\newblock In \emph{Proceedings of the IEEE/CVF International Conference on
  Computer Vision}, pages 7378--7387, 2023.

\bibitem[Ye et~al.(2023)Ye, Zhang, Liu, Han, and Yang]{ye2023ip-adapter}
Hu Ye, Jun Zhang, Sibo Liu, Xiao Han, and Wei Yang.
\newblock Ip-adapter: Text compatible image prompt adapter for text-to-image
  diffusion models.
\newblock \emph{arXiv preprint arxiv:2308.06721}, 2023.

\bibitem[Zhang et~al.(2023)Zhang, Lewis, and Kleijn]{zhang2023exact}
Guoqiang Zhang, Jonathan~P Lewis, and W~Bastiaan Kleijn.
\newblock Exact diffusion inversion via bi-directional integration
  approximation.
\newblock \emph{arXiv preprint arXiv:2307.10829}, 2023.

\bibitem[Zhang et~al.(2024)Zhang, Wei, Zhang, Song, Liu, Li, Tang, Hu, and
  Zhao]{stable-makeup}
Yuxuan Zhang, Lifu Wei, Qing Zhang, Yiren Song, Jiaming Liu, Huaxia Li, Xu
  Tang, Yao Hu, and Haibo Zhao.
\newblock Stable-makeup: When real-world makeup transfer meets diffusion model.
\newblock \emph{arXiv preprint arXiv:2403.07764}, 2024.

\end{thebibliography}
}


\end{document}